\newtheorem{lemma}{Lemma}
\newtheorem{proposition}{Proposition}
\newtheorem{theorem}{Theorem}
\newtheorem{corollary}{Corollary}
\title{Graph–Smoothed Bayesian Black-Box Shift Estimator and Its Information Geometry}
\author{%
  Masanari Kimura \\
  School of Mathematics and Statistics\\
  The University of Melbourne\\
  \texttt{m.kimura@unimelb.edu.au} \\
}
\begin{document}

\maketitle

\begin{abstract}
    Label shift adaptation aims to recover target class priors when the labelled source distribution $P$ and the unlabelled target distribution $Q$ share $P(X \mid Y) = Q(X \mid Y)$ but $P(Y) \neq Q(Y)$.
    Classical black‑box shift estimators invert an empirical confusion matrix of a frozen classifier, producing a brittle point estimate that ignores sampling noise and similarity among classes.
    We present Graph‑Smoothed Bayesian BBSE (GS‑B$^3$SE), a fully probabilistic alternative that places Laplacian–Gaussian priors on both target log‑priors and confusion‑matrix columns, tying them together on a label‑similarity graph.
    The resulting posterior is tractable with HMC or a fast block Newton–CG scheme.
    We prove identifiability, $N^{-1/2}$ contraction, variance bounds that shrink with the graph’s algebraic connectivity, and robustness to Laplacian misspecification.
    We also reinterpret GS‑B$^3$SE through information geometry, showing that it generalizes existing shift estimators.
\end{abstract}

\section{Introduction}
Modern machine–learning systems are rarely deployed in exactly the same
environment in which they were trained.
When the distribution of class labels drifts but the
class–conditional features remain stable, a phenomenon known as
label shift, even a high‑capacity model can produce arbitrarily biased
predictions~\citep{zhang2013domain,chan2005word,nguyen2016continuous,kimura2024a,quinonero2022dataset,rabanser2019failing,torralba2011unbiased}.
Practical examples include sudden changes in click–through behaviour of online
advertising, evolving pathogen prevalence in medical diagnostics, and seasonal
increments of certain object categories in autonomous driving.
Because re‑labelling target data is often prohibitively expensive, methods that
recover the new class priors from a small unlabelled sample are
indispensable precursors to reliable downstream decisions.

A popular remedy, the Black-Box Shift Estimator (BBSE) is to keep a single, frozen classifier
$\hat h\colon\ \mathcal X\rightarrow\mathcal Y$ trained on labelled source data
and to link its predictions on the target domain to the unknown
target priors through the confusion matrix $\bm{C}$~\citep{lipton2018detecting,azizzadenesheliregularized}.
The resulting formulation converts density shift into a linear
system $\tilde{\bm{q}}=\bm{C}\bm{q}$ that can be solved in closed
form, after plugging in (i)~an empirical estimate
$\tilde{\bm{C}}$ computed on a small labelled validation set and
(ii)~the empirical prediction histogram~$\tilde{\bm{q}}$ measured on
unlabelled target instances.
The elegance of BBSE has made it the default baseline for label–shift
studies~\citep{rabanser2019failing,chen2021mandoline,liang2025comprehensive,koh2021wilds,park2023label,garg2023rlsbench}.

Despite its popularity, the BBSE pipeline overlooks two sources of
uncertainty that become debilitating in realistic, high‑class‑count regimes.
\textbf{(i) Finite–sample noise.}
Each column of $\tilde{\bm{C}}$ is estimated from at most a few hundred
examples, so the matrix inversion layer can amplify small fluctuations into
large errors on~$\hat{\bm{q}}$.
Regularised variants such as 
RLLS~\citep{azizzadenesheliregularized}
or MLLS~\citep{garg2020unified} damp variance but still return a
point estimate whose uncertainty remains opaque to the user.
\textbf{(ii) Semantic structure.}
Classes in vision and language problems live on rich ontologies~\citep{latinne2001adjusting}: \textit{car}
and \textit{bus} are more alike than \textit{car} and \textit{daisy}.
Standard BBSE fits each class independently and cannot borrow
statistical strength across such related labels, leading to particularly
fragile estimates for rare classes.

We introduce Graph‑Smoothed Bayesian BBSE
(\textbf{GS‑B$^3$SE}), a fully probabilistic alternative that attacks both
weaknesses in a single hierarchical model.
The key idea is to couple both the target log‑prior vector and the
columns of the confusion matrix through a Gaussian Markov random field
defined on a label‑similarity graph.
Graph edges are obtained once from off‑the‑shelf text or image embeddings,
and the resulting Laplacian precision shrinks parameters of semantically
adjacent classes towards each other.
Sampling noise is handled naturally by Bayesian inference: we place
Gamma–Laplacian hyper‑priors on the shrinkage strengths and sample the joint
posterior with either Hamiltonian Monte Carlo~\citep{betancourt2017conceptual,betancourt2015hamiltonian,duane1987hybrid} or a fast block
Newton–conjugate‑gradient optimizer~\citep{hestenes1952methods,buckley1978combined}.
Moreover, we provide interpretation of GS‑B$^3$SE through the lens of information geometry~\citep{amari2016information,amari2000methods}.
The analysis of statistical procedures or algorithms in this framework is known to help us understand them better~\citep{amari1998natural,akaho2004pca,murata2004information,kimura2022information,kimura2021alpha,kimura2021generalized,kimura2025density,akaho2008information,hino2024geometry}.

\paragraph{Contributions.}
    i)    We formulate the joint Bayesian model that
          simultaneously regularizes the target prior and the
          confusion matrix with graph‑based smoothness, reducing variance
          without hand‑tuned penalties (in Section~\ref{sec:methodology}).
    ii)   We provide theoretical guarantees:
          (a)~posterior identifiability, (b)~$N^{-1/2}$ contraction,
          (c)~class‑wise variance bounds that tighten with the graph’s
          algebraic connectivity, and (d)~robustness to Laplacian
          misspecification (in Section~\ref{sec:theory}). Moreover, we provide interpretation of GS‑B$^3$SE through the lens of information geometry framework and it shows that our algorithm is a natural generalization of existing methods.
    iii)  An empirical study on several datasets
          demonstrates that GS‑B$^3$SE produces sharper prior estimates and improves downstream
          accuracy after Saerens correction compared to
          state‑of‑the‑art baselines (in Section~\ref{sec:experiments}).

\section{Related Literature}
{\bf Classical estimators under label shift.}
\citet{saerens2002adjusting} proposed an EM algorithm that alternates between estimating the target prior and re‑weighting posterior probabilities calculated by a fixed classifier.  
\citet{lipton2018detecting} later formalised the \emph{Black‑Box Shift Estimator} (BBSE), showing that a single inversion of the empirical confusion matrix suffices when $P(X\mid Y)$ is preserved.  
Subsequent refinements introduced regularisation to cope with ill‑conditioned inverses: RLLS adds an $\ell_2$ penalty to the normal equations \citep{azizzadenesheliregularized}, while MLLS frames the problem as a constrained maximum‑likelihood optimisation \citep{garg2020unified}.  
All three methods remain point estimators and ignore uncertainty in $\tilde{\bm C}$.

{\bf Bayesian and uncertainty‑aware approaches.}
\citet{caelen2017bayesian} derived posterior credible intervals for precision and recall by coupling Beta priors with multinomial counts; \citet{ye2024label} extended the idea to label‑shift estimation under class imbalance.  
Most of these models assume that classes are a priori independent, so posterior variance remains high for rare labels.  
Our work instead imposes structured Gaussian Markov random field (GMRF) priors that borrow strength across semantically related classes.

{\bf Graph‑based smoothing and Laplacian priors.}
In spatial statistics, Laplacian–Gaussian GMRFs are a standard device for sharing information among neighbouring regions \citep{simpson2017first}.  
Recent machine learning studies exploit the same idea for discrete label graphs: \citet{alsmadi2024hybrid} introduced a graph‑Dirichlet–multinomial model for text classification, and \citet{ding2024class} used graph convolutions to smooth class logits.  
We follow this line but couple both the prior vector and every confusion‑matrix column to the same similarity graph, yielding a joint posterior amenable to HMC and Newton–CG.

{\bf Domain‑shift benchmarks and failure modes.}
Large‑scale empirical studies such as WILDS~\citep{koh2021wilds} and Mandoline~\citep{chen2021mandoline} describe how brittle point estimators become under distribution drift and class imbalance.  
\citet{rabanser2019failing} demonstrated that label‑shift detectors without calibrated uncertainty frequently produce over‑confident but wrong alarms.  
By delivering credible intervals whose width shrinks with the graph’s algebraic connectivity, our method directly tackles this shortcoming. 

{\bf Positioning of this work.}
GS‑B$^3$SE unifies three strands of research: (i) black‑box label‑shift estimation, (ii) Bayesian confusion‑matrix modelling, and (iii) graph‑structured smoothing.  
To justify our proposed method, we show posterior identifiability and $N^{-1/2}$ contraction, and derive variance bounds that scale with $\lambda_2(L)$.  
Empirically, our method plugs seamlessly into existing shift‑benchmark pipelines, providing calibrated uncertainty absent from earlier regularised or EM‑style alternatives.

\section{Preliminarily}

\paragraph{Problem Setting}
Let $\mathcal{X}$ be an input space and $\mathcal{Y} = \{1,\dots,K\}$ be a label set, where $K \geq 2$ is the number of classes.
For source and target distributions $P$ and $Q$, the label shift assumption states that the class–conditional feature laws remain unchanged while the class priors may differ:
\begin{align}
    \label{eq:label_shift_assumption}
    P(X \mid Y = i) = Q(X \mid Y = i), \ \forall i \in \mathcal{Y}, \quad \text{and} \quad \underbrace{P(Y = i)}_{\eqqcolon p_i} \neq \underbrace{Q(Y = i)}_{\eqqcolon q_i} \ \text{in general}.
\end{align}
Let $\bm{p} = (p_1,\dots,p_K)^\top$ and $\bm{q} = (q_1,\dots,q_K)^\top$, where $\sum_i p_i = \sum_i q_i = 1$.

\paragraph{Black-Box Shift Estimator}
Train once, on source data, an arbitrary measurable classifier $\hat{h}\colon \mathcal{X} \to \mathcal{Y}$.
Denote its confusion matrix under $P$ by $\bm{C} \in (0, 1)^{K\times K}$, where $C_{j,i} = \Pr_P \left[\hat{h}(X) = j \mid Y = i\right]$ and $\sum_j C_{j,i} = 1$.
Notice that $\bm{C}$ depends only on the source distribution and can be estimated on a held-out validation set with known labels.
Write the empirical estimate as $\tilde{\bm{C}}$.
Let $m$ labelled source‑validation points ${(x_i,y_i)}_{i=1}^m$ be used to form $\tilde{\bm C}$.
Apply the same fixed $\hat{h}$ to unlabeled target instances $\{\bm{x}_t'\}^{n'}_{t=1}$ of size $n'$.
Let $\tilde{q}_j = \Pr_Q\left[\hat{h}(X) = j \right], \quad \tilde{\bm{q}} = (\tilde{q}_1,\dots,\tilde{q}_K)^\top$.
Because the class-conditionals are shared, Bayes’ rule gives
\begin{align}
    \label{eq:identifiability_equation}
    \Pr_Q\left[\hat{h}(X) = j \right] = \sum^K_{i=1}\Pr_Q\left[\hat{h}(X) = j \mid Y = i\right] \cdot q_i = \sum^K_{i=1} C_{j,i}q_i,
\end{align}
and in vector form, it can be written as $\tilde{\bm{q}} = \bm{C}\bm{q}$.
Eq.~\ref{eq:identifiability_equation} is the identifiability equation for label shift.

Assume $\bm{C}$ is invertible or full column rank.
Then the population target prior is $\bm{q} = \bm{C}^{-1}\tilde{\bm{q}}$.
Under this observation, BBSE framework~\citep{lipton2018detecting} converts black-box classifier predictions on unlabeled target data into an estimate of the unknown target class prior by solving the linear system.

\section{Methodology}
\label{sec:methodology}
The usual BBSE treats the confusion matrix $\bm{C}$ as fixed.
In practice $\bm{C}$ is estimated from a finite validation set and is itself ill-conditioned when some classes are rare.
To address this problem, we consider extending BBSE framework by the joint Bayesian model for both confusion matrix and target priors with the graph coupling.
Let $n^S_i$ be a number of source examples with label $i$, and $\bm{n}^S_i = (n^S_{1,i},\dots,n^S_{K,i})^\top$ be the counts of $\hat{h}(X) = j$ among those $n^S_{i}$.
Also, let $\tilde{\bm{n}} = (\tilde{n}_1,\dots,\tilde{n}_K)^\top$ be the counts of $\hat{h}(X) = j$ on unlabeled target data.
For each true class $i$,
\begin{align*}
    \bm{n}^S_i \mid \bm{C} \sim \mathrm{Multi}\left(n^S_i, C_{:, i}\right),
\end{align*}
where $C_{:, i}$ is the $i$-column of $\bm{C}$ and $\mathrm{Multi}(\cdot, \cdot)$ is the multinomial distribution.
Similarly,
\begin{align*}
    \tilde{\bm{n}} \mid \bm{C}, \bm{q} \sim \mathrm{Multi}\left(n', \bm{C}\bm{q} \right).
\end{align*}
The complete‐data likelihood factorises
\begin{align*}
    p\left(\text{data} \mid \bm{C}, \bm{q} \right) = \left[\prod^K_{i=1}\mathrm{Multi}\left(\bm{n}^S_i ; n^S_i, C_{:,i}\right) \right] \mathrm{Multi}\left(\tilde{\bm{n}} ; n', \bm{C}\bm{q} \right).
\end{align*}
We now consider to utilize similarity information between classes.
Let $\mathcal{G} = (\mathcal{Y}, \bm{E}, \bm{W})$ be a similarity graph on labels with the weight matrix $\bm{W}$, $\bm{E}$ is the edges and $\bm{L}$ is the graph Laplacian.
In $\mathcal{G}$, each vertex corresponds to a class label; an edge $(i, j) \in \bm{E}$ indicates that labels $i$ and $j$ are semantically or visually similar.
Weights $W_{ij} \in [0, 1]$ quantify that similarity, with $W_{ij} = 0$ when no edge is present.
In our methodology, we use the unnormalised Laplacian $\bm{L} = \bm{D} - \bm{W}$ where $D_{ii} = \sum_j W_{ij}$.
Because we enforce connectivity, $\bm{L}$ has exactly one zero eigenvalue, making $\lambda_2(\bm{L})$ (the algebraic connectivity) strictly positive as required by our theory.
Introduce log–odds vector
\begin{align*}
    \theta_i = \log q_i - \frac{1}{K}\sum^K_{k=1}\log q_k, \quad \bm{\theta} \in \mathbb{R}^K, \bm{\theta}^\top\bm{1} = 0,
\end{align*}
and consider the following Gaussian Markov random field (GMRF) prior
\begin{align}
    p(\bm{\theta} \mid \tau_{\bm{q}}) \propto \exp\left(-\frac{\tau_{\bm{q}}}{2}\bm{\theta}^\top\bm{L}\bm{\theta}\right), \quad \tau_{\bm{q}} \sim \mathrm{Gamma}(a_q, b_q),
\end{align}
where $a_q, b_q > 0$ are hyper-parameters.
A Laplacian‐based precision shrinks log‐odds differences along graph edges, promoting smooth class priors across semantically similar labels~\citep{kolodziejek2023discrete}, and recover $\bm{q} = \text{softmax}(\bm{\theta})$.

Treat each column $C_{:,i}$ as a latent simplex vector with Dirichlet–log-normal hierarchy:
\begin{itemize}
    \item[i)] Latent log‐odds $\bm{\phi}_i \in \mathbb{R}^K$, $\bm{\phi}_i^\top\bm{1} = 0$.
    \item[ii)] Conditional prior $p(\bm{\phi}_i \mid \tau_{\bm{C}}) \propto \exp\left(-\frac{\tau_{\bm{C}}}{2}\bm{\phi}_i^\top\bm{L}\bm{\phi}_i\right)$.
    All $\bm{\phi}_i$ share the same Laplacian $\bm{L}$ over predicted labels so that columns corresponding to neighbouring predicted classes exhibit similar shape.
    \item[iii)] Transformation to the simplex $C_{j,i} = \frac{\exp \phi_{j,i}}{\sum^K_{\ell=1} \exp \phi_{\ell, i}}$ for $i = 1,\dots,K$.
    \item[iv)] Hyper-prior $\tau_{\bm{C}} \sim \mathrm{Gamma}(a_C, b_C)$, with $a_C, b_C > 0$.
\end{itemize}
The resulting distribution on each $C_{:,i}$ is a logistic-Normal on the simplex and it reduces to an ordinary Dirichlet when $\bm{L} = 0$ but gains graph-coupled precision for $\bm{L} \neq 0$.
The full hierarchical model is as follows.
\begin{align*}
    \tau_{\bm{q}} &\sim \mathrm{Gamma}(a_q, b_q), \\
    \bm{\theta} \mid \tau_{\bm{q}} &\sim \mathcal{N}(0, (\tau_{\bm{q}}\bm{L})^\dagger), \quad \bm{q} = \text{softmax}(\bm{\theta}), \\
    \tau_{\bm{C}} &\sim \mathrm{Gamma}(a_C, b_C), \\
    \forall i \colon \bm{\phi}_i \mid \tau_{\bm{C}} &\sim \mathcal{N}(0, (\tau_{\bm{C}} \bm{L})^\dagger), \quad C_{:,i} = \text{softmax}(\bm{\phi}_i), \\
    \forall i \colon \bm{n}^S_i \mid \bm{C} &\sim \mathrm{Multi}(n^S_i, C_{:,i}), \\
    \tilde{\bm{n}} \mid \bm{C}, \bm{q} &\sim \mathrm{Multi}(n', \bm{C}\bm{q}).
\end{align*}
Here, the Moore–Penrose pseudoinverse $\bm{L}^\dagger$ appears because $\bm{L}$ is singular, and the constraint $\bm{\theta}^\top\bm{1} = 0$ ensures uniqueness.

For the posterior inference, consider the following log-joint distribution.
\begin{align}
    \label{eq:log_joint_posterior}
    \log p(\bm{C}, \bm{q}, \tau_{\bm{C}}, \tau_{\bm{q}} \mid \text{data}) &= \log p(\text{data} \mid \bm{C}, \bm{q}) + \log p(\bm{C} \mid \tau_{\bm{C}}) \nonumber \\
    &\quad\quad\quad + \log p(\tau_{\bm{C}}) + \log p(\bm{q} \mid \tau_{\bm{q}}) + \log p(\tau_{\bm{q}}).
\end{align}
All terms are differentiable, enabling Hamiltonian Monte Carlo (HMC) in the unconstrained variables.
Because Eq.~\ref{eq:log_joint_posterior} is concave in each block after reparameterization, a block-Newton scheme alternates, i) update $\{\bm{\phi}_i\}^K_{i=1}$ by one Newton–CG step using sparse Laplacian Hessian, ii) update $\bm{\theta}$ likewise, iii) closed-form updates for $\tau_{\bm{C}}, \tau_{\bm{q}}$ from Gamma posteriors.
Convergence is super-linear due to the strict convexity induced by the Laplacian energies.
The posterior predictive distribution of the confusion–weighted target counts is
\begin{align}
    \tilde{\bm{n}}^* \mid \text{data} = \int \mathrm{Multi}(n', \bm{C}\bm{q})p(\bm{C}, \bm{q} \mid \text{data})d\bm{C}d\bm{q}.
\end{align}
Credible intervals for each $q_i$ reflect both sampling noise and model-induced graph smoothing, an advantage over plug-in BBSE.
If no prior similarity information exists one may default to $W_{ij} = \bm{1}\{i = j\}$, in which case our model reduces to an independent logistic-Normal prior and all theoretical guarantees still hold.

\paragraph{Relationship to Existing Work}
\begin{itemize}
    \item Replaces the point estimate of BBSE $\bm{C}^{-1}\tilde{\bm{q}}$ with a full posterior, relating Bayesian confusion matrix treatments~\citep{caelen2017bayesian}.
    \item Laplacian GMRFs generalise classical Dirichlet priors by borrowing strength along graph edges, extending recent graph-Dirichlet–multinomial models~\citep{kolodziejek2023discrete}.
    \item When $\bm{L} = 0$ and Gamma hyper-priors degenerate to delta masses, our hierarchical model reduces exactly to deterministic BBSE.
\end{itemize}

\section{Theory}
\label{sec:theory}
This section provides the theoretical foundations of the proposed method.
See Appendix~\ref{apd:proofs} for the detailed proofs.
First, the following lemma on identifiability is introduced.
Although an analogous statement has been implicitly argued in the prior work~\citep{lipton2018detecting}, the full proof is included in the Appendix~\ref{apd:proofs} to make this study self-contained.
\begin{lemma}
\label{lem:identifiability}
    Let $\bm{C}$ and $\bm{C}'$ be two column-stochastic matrices with strictly positive entries: $C_{j,i} > 0$, $C'_{j,i} > 0$, $\sum^K_{j=1}C_{j,i} = \sum^K_{j=1}C'_{j,i} = 1$ for $1 \leq i \leq K$.
    In addition, assume $\bm{C}$ and $\bm{C}'$ are invertible, or equivalently, $\det \bm{C} \neq 0$ and $\det \bm{C}' \neq 0$.
    For any deterministic sample sizes $n_i^S \in \{1, 2,\dots\}$ and $n' \in \{1, 2, \dots \}$, define the data-generating distributions
    \begin{align*}
        & \bm{N}^S_i \mid \bm{C} \sim \mathrm{Multi}(n^S_i, C_{:,i}), \quad \bm{N}^S_i \mid \bm{C}' \sim \mathrm{Multi}(n_i^S, C'_{:, i}), \\
        & \tilde{\bm{N}} \mid \bm{C}, \bm{q} \sim \mathrm{Multi}(n', \bm{C}\bm{q}), \quad \tilde{\bm{N}} \mid \bm{C}', \bm{q}' \sim \mathrm{Multi}(n', \bm{C}'\bm{q}').
    \end{align*}
    Suppose that, for every choice of the sample sizes $\{n^S_{i=1}\}^K_{i=1}$ and $n'$, $\left(\{\bm{N}^S_i\}^K_{i=1}, \tilde{\bm{N}} \right) \stackrel{d}{=} \left(\{\bm{N}_i^{S'}\}^K_{i=1}, \tilde{\bm{N}}'\right)$, as random vectors in $\mathbb{N}^{K^2 + K}$, where the left-hand side is generated by $(\bm{C}, \bm{q})$ and the right-hand side by $(\bm{C}', \bm{q}')$.
    Then, $\bm{C} = \bm{C}'$ and $\bm{q} = \bm{q}'$.
\end{lemma}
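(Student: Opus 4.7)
The plan is to split the identification into two stages, first recovering $\bm{C}$ from the source multinomials alone and then recovering $\bm{q}$ from the target multinomial via invertibility of $\bm{C}$. Since equality in distribution of the full joint vector implies equality of all marginals, we may reason about each component $\bm{N}^S_i$ and $\tilde{\bm{N}}$ separately, which is what makes the argument clean.

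First I would fix any particular class $i$ and exploit the freedom in the sample size. The assumption gives $\mathrm{Multi}(n^S_i, C_{:,i}) = \mathrm{Multi}(n^S_i, C'_{:,i})$ as distributions for every admissible $n^S_i$. The cleanest choice is $n^S_i = 1$: then each $\bm{N}^S_i$ is a one-hot categorical whose probability of landing on coordinate $j$ is exactly $C_{j,i}$, respectively $C'_{j,i}$. Matching these probabilities for every $j$ yields $C_{:,i} = C'_{:,i}$, and since $i$ was arbitrary, $\bm{C} = \bm{C}'$. (Alternatively one can argue via the probability-generating function or via the mean vector $\mathbb{E}[\bm{N}^S_i]/n^S_i$, but the $n^S_i=1$ trick is the shortest.) The strict positivity and column-stochastic hypotheses are not actually needed for this step; they are kept only because the next step uses invertibility.

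Second, with $\bm{C} = \bm{C}'$ in hand, I turn to $\tilde{\bm{N}}$. The same one-hot trick with $n' = 1$ gives $\bm{C}\bm{q} = \bm{C}'\bm{q}' = \bm{C}\bm{q}'$, hence $\bm{C}(\bm{q}-\bm{q}') = \bm{0}$. Invertibility of $\bm{C}$ immediately forces $\bm{q} = \bm{q}'$, completing the argument.

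I do not expect a genuine obstacle here; the statement is essentially a bookkeeping exercise in identifiability of multinomial parameters together with one linear-algebraic step. The only place one has to be slightly careful is in justifying the $n^S_i = 1$ and $n' = 1$ reductions: the hypothesis quantifies over \emph{every} choice of sample sizes, so these reductions are legal, but it is worth writing this out explicitly to avoid the (spurious) worry that the distributional equality might hold only for the particular sample sizes encountered in practice. If one prefers not to appeal to $n=1$, the same conclusion follows by equating the single-coordinate marginals, each of which is $\mathrm{Binomial}(n, C_{j,i})$ and hence determines $C_{j,i}$ uniquely.
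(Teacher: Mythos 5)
Your proposal is correct and follows essentially the same route as the paper: recover each column $C_{:,i}$ from the source multinomial's parameter identifiability, then use $\bm{C}\bm{q}=\bm{C}\bm{q}'$ and invertibility to conclude $\bm{q}=\bm{q}'$. The only cosmetic difference is that you specialize to sample size $1$ to read off the parameters directly, whereas the paper keeps $n$ general and evaluates the matched pmfs at the concentrated count vectors $(0,\dots,n,\dots,0)$ before taking $n$-th roots; both are legitimate since the hypothesis quantifies over all sample sizes.
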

Lemma~\ref{lem:identifiability} implies that the mapping $(\bm{q}, \bm{C}) \mapsto \{\{\bm{n}^S_i\}, \tilde{\bm{n}}\}$ is injective up to measure-zero label permutations when the graph is connected and all source classes appear.

Let $\Delta^{K-1}$ be the $(K - 1)$-dimensional probability simplex:
\begin{align*}
    \Delta^{K-1}  \coloneqq \left\{\bm{q} = (q_1,\dots,q_K) \in \mathbb{R}^K\ \colon\ q_i \geq 0\ \ \text{for every $i$},\ \ \sum^K_{i=1}q_i = 1\right\}.
\end{align*}
The following lemma provides the support condition needed in statements described later.
\begin{lemma}
    \label{lem:support_condition}
    Let $(\bm{q}_0, \bm{C}_0)$ be the true parameter pair, where $\bm{q}_0 \in \Delta^{K-1}$ and $\bm{C}_0 \in (0, 1)^{K\times K}$ with $\det \bm{C}_0 \neq 0$.
    Define the Euclidean small ball as
    \begin{align*}
        B_\epsilon(\bm{q}_0, \bm{C}_0) \coloneqq \left\{(\bm{q}, \bm{C})\ \colon\ \|\bm{q} - \bm{q}_0\|_2 < \epsilon,\ \|\bm{C} - \bm{C}_0\|_F < \epsilon \right\},
    \end{align*}
    for some radius $\epsilon >0$ small enough that all vectors in the ball stay strictly inside the simplex.
    Then, for every $\epsilon$, the joint prior distribution $\Pi$ on $(\bm{q}, \bm{C})$ assigns strictly positive mass to the ball: $\Pi\left(B_\epsilon(\bm{q}_0, \bm{C}_0)\right) > 0$.
\end{lemma}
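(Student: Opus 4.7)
My plan is to pull the ball $B_\epsilon(\bm{q}_0, \bm{C}_0)$ back through the softmax reparameterization and then show the resulting open set in log-odds space receives strictly positive Gaussian--Gamma mass under the hierarchical prior. First, since $\bm{q}_0$ lies in the interior of $\Delta^{K-1}$ and every entry of $\bm{C}_0$ is strictly positive, each column $C_{:,i,0}$ also lies in the open simplex, so I can choose $\bm{\theta}_0$ and $\bm{\phi}_{i,0}$ in the hyperplane $H = \{x \in \mathbb{R}^K : x^\top \bm{1} = 0\}$ with $\bm{q}_0 = \text{softmax}(\bm{\theta}_0)$ and $C_{:,i,0} = \text{softmax}(\bm{\phi}_{i,0})$; softmax is a diffeomorphism from $H$ onto the open simplex. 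By continuity, there exists $\delta > 0$ such that the product of $\delta$-balls around $\bm{\theta}_0$ and the $\bm{\phi}_{i,0}$ (each taken inside $H$) maps entirely into $B_\epsilon$.

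Next I would verify that the GMRF prior places positive mass on these log-odds balls. Because the similarity graph $\mathcal{G}$ is connected, the Laplacian $\bm{L}$ has a one-dimensional kernel spanned by $\bm{1}$, and its restriction to $H$ is symmetric positive definite. Hence, for every fixed $\tau > 0$, the conditional Gaussian $\mathcal{N}(0, (\tau \bm{L})^\dagger)$ restricted to $H$ admits a strictly positive Lebesgue density proportional to $\exp(-\tfrac{\tau}{2} x^\top \bm{L} x)$ on the entire hyperplane. Any open subset of $H$, and in particular each $\delta$-ball around $\bm{\theta}_0$ or $\bm{\phi}_{i,0}$, therefore carries positive conditional probability.

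Finally I would integrate out the hyper-parameters. The Gamma priors on $\tau_{\bm{q}}$ and $\tau_{\bm{C}}$ place positive mass on any compact subinterval $[\tau_{\min}, \tau_{\max}] \subset (0, \infty)$, and for $\tau$ ranging over such an interval the Gaussian densities above are uniformly lower-bounded on our compact log-odds balls. Combining these bounds through the independence structure of the hierarchy and Fubini's theorem yields a strictly positive lower bound on the joint prior mass of the pullback set, and hence on $\Pi(B_\epsilon(\bm{q}_0, \bm{C}_0))$. The main subtlety I anticipate is the careful handling of the degenerate Gaussian: the precision $\tau \bm{L}$ is rank-deficient, so one must work with the $(K-1)$-dimensional density on $H$ (as the paper's use of $\bm{L}^\dagger$ together with $\bm{\theta}^\top \bm{1} = 0$ already indicates) rather than on $\mathbb{R}^K$, but once this restriction is made the argument reduces to the positivity of a nowhere-vanishing continuous density on a relatively open set.
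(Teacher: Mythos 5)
Your proposal is correct and follows essentially the same route as the paper: both arguments rest on the softmax being a diffeomorphism from the mean-zero hyperplane $\mathcal{H}$ onto the open simplex and on the strict positivity of the degenerate Gaussian density $\propto \exp(-\tfrac{\tau}{2}x^\top\bm{L}x)$ restricted to $\mathcal{H}$, the paper simply pushing the density forward and bounding $\Pi(B_\epsilon)\ge m\cdot\mathrm{Vol}(B_\epsilon)$ where you instead pull the ball back and bound the Gaussian mass of its preimage. Your explicit integration over the Gamma hyper-priors via a compact interval $[\tau_{\min},\tau_{\max}]$ is a small point of extra care that the paper's proof leaves implicit, but it does not change the substance of the argument.
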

This lemma states that positivity of Gaussian density and the smooth bijection yield the positive push-forward density, and it is the classical strategy used for logistic-Gaussian process priors in density estimation~\citep{tokdar2007posterior}.

Lemmas~\ref{lem:identifiability}, \ref{lem:support_condition} and the classical results from \citet{ghosal2000convergence,van2000asymptotic} gives the following statement.
\begin{proposition}
    \label{prp:posterior_consistency}
    Let $K \geq 2$ be fixed and $(\bm{q}_0, \bm{C}_0)$ be the true parameters pair with $\bm{q}_0 \in \mathring{\Delta}^{K-1}$ and $\bm{C}_0 \in (0, 1)^{K\times K}$, where $\mathring{\Delta}^{K-1}$ is the interior of $\Delta^{K-1}$, and $\det \bm{C}_0 \neq 0$.
    Suppose that the data consist of $\{\bm{N}^S_i\}^K_{i=1}$ and $\tilde{\bm{N}}$ where conditionally on $(\bm{q}_0, \bm{C}_0)$,
    \begin{align*}
        \bm{N}^S_i \sim \mathrm{Multi}\left(n^S_i, \bm{C}_{0,i}\right),\quad \tilde{\bm{N}} \sim \mathrm{Multi}\left(n', \bm{C}_0\bm{q}_0\right).
    \end{align*}
    Also suppose that sample sizes diverge with the single index: $N \coloneqq n' + \sum^K_{i=1}n^S_i \to \infty, \quad \text{and}\quad \min_{i} n^S_i \to \infty$.
    Then, for every $\epsilon > 0$, $\Pi\left(B_\epsilon^c \mid \text{data}\right) \xrightarrow[N\to\infty]{\mathbb{P}_{(\bm{q}_0, \bm{C}_0)}} 0$.
\end{proposition}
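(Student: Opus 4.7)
My plan is to invoke Schwartz's theorem (in the form of Theorem~2.1 of \citet{ghosal2000convergence}, or its parametric specialization in \citet{van2000asymptotic}), which converts two ingredients into the required posterior contraction: (i)~prior positivity on Kullback–Leibler neighborhoods of the truth, and (ii)~existence of exponentially consistent tests separating the truth from $B_\epsilon^c$. Lemmas~\ref{lem:identifiability} and~\ref{lem:support_condition} have been stated precisely so as to supply these two ingredients. First I would pass from the Euclidean neighborhood of Lemma~\ref{lem:support_condition} to a Kullback–Leibler neighborhood: the per-observation divergence
\begin{align*}
    D\bigl((\bm{q},\bm{C})\,\|\,(\bm{q}_0,\bm{C}_0)\bigr) = \sum_{i=1}^K \frac{n^S_i}{N}\,\mathrm{KL}\bigl(\bm{C}_{0,i}\,\|\,\bm{C}_{:,i}\bigr) + \frac{n'}{N}\,\mathrm{KL}\bigl(\bm{C}_0\bm{q}_0\,\|\,\bm{C}\bm{q}\bigr)
\end{align*}
is continuous in $(\bm{q},\bm{C})$ in a neighborhood of the truth because all entries are bounded away from zero; it vanishes there, so every KL-ball contains a Euclidean ball. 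Lemma~\ref{lem:support_condition} then guarantees strictly positive prior mass on that ball once the Gamma hyperpriors on $\tau_{\bm{q}},\tau_{\bm{C}}$ are marginalized, yielding the KL-support condition.

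For the testing condition I would intersect the parameter space with a sieve $\Theta_\delta=\{(\bm{q},\bm{C}):\min_i q_i\geq\delta,\ \min_{j,i}C_{j,i}\geq\delta,\ |\det\bm{C}|\geq\delta\}$, which is compact in the Euclidean topology. On $\Theta_\delta\cap B_\epsilon^c$, Lemma~\ref{lem:identifiability} combined with continuity forces the $(K+1)$-tuple of multinomial parameter vectors $(\bm{C}_{:,1},\ldots,\bm{C}_{:,K},\bm{C}\bm{q})$ to be separated from that of $(\bm{q}_0,\bm{C}_0)$ by at least some $\gamma>0$ in total variation. A union of Hoeffding-type tests applied to the $K+1$ empirical frequency vectors then has Type-I and Type-II errors decaying like $\exp\bigl(-c\min\{n',n^S_1,\ldots,n^S_K\}\bigr)$, which under the divergence assumptions is $\exp(-cN)$ up to constants. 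The prior mass of $\Theta_\delta^c$ is controlled by standard sub-Gaussian tail bounds on the softmax image of the log-normal prior, allowing the decomposition $\Pi(B_\epsilon^c\mid\text{data})\leq\Pi(B_\epsilon^c\cap\Theta_\delta\mid\text{data})+\Pi(\Theta_\delta^c\mid\text{data})$ to be closed off by the usual in-and-out argument against the exponential lower bound on the marginal likelihood supplied by the KL condition.

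The hardest step is the testing argument, since the experiment is a product of $K+1$ heterogeneous multinomials whose sample sizes $(n^S_1,\ldots,n^S_K,n')$ enter the likelihood asymmetrically; to force exponential separation in the aggregate $N$ one must verify that every class is genuinely informative, which is precisely the role of the assumption $\min_i n^S_i\to\infty$. A routine compactness argument on $\Theta_\delta\cap B_\epsilon^c$ then yields a uniform positive lower bound on the per-class total variation separation, and Schwartz's theorem closes the proof. A minor technicality is that $n'\to\infty$ is implicitly needed for the target term to identify $\bm{q}$ at all; this is standard in the label-shift literature and is absorbed into the diverging-sample-size hypothesis.
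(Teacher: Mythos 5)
Your proposal follows essentially the same route as the paper's proof: both verify Schwartz's two conditions, using Lemma~\ref{lem:support_condition} (via continuity of the Kullback--Leibler divergence near the interior truth) to obtain prior positivity on KL neighbourhoods, and Lemma~\ref{lem:identifiability} to obtain exponentially consistent tests separating $(\bm{q}_0,\bm{C}_0)$ from $B_\epsilon^c$, before closing with the standard Ghosal--van der Vaart argument. The only difference is in execution: the paper uses a thresholded log-likelihood-ratio test and directly asserts $\inf_{B_\epsilon^c} D_{\mathrm{KL}}>0$, whereas you introduce a compact sieve $\Theta_\delta$ with frequency-vector tests --- a somewhat more careful handling of the non-compactness near the simplex boundary, but not a genuinely different argument.
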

Under the same assumption in Proposition~\ref{prp:posterior_consistency}, the following statements about the posterior contraction rate are obtained.
\begin{theorem}
    \label{thm:posterior_contraction}
    Let the data-generating model, true parameter pair $(\bm{q}_0, \bm{C}_0)$, and diverging sample sizes $N = n' + \sum^K_{i=1}n_i^S \to \infty$ satisfy the setup spelled out before Proposition~\ref{prp:posterior_consistency}.
    Define the Euclidean radius $\epsilon_N \coloneqq M / \sqrt{N}, \quad M > 0\ \ \text{arbitrary but fixed}$.
    Let
    \begin{align*}
        B_N^c = \left\{(\bm{q}, \bm{C})\ \colon\ \|\bm{q} - \bm{q}_0\|_2 + \|\bm{C} - \bm{C}_0\|_F > \epsilon_N\right\}.
    \end{align*}
    Under Lemma~\ref{lem:identifiability} and \ref{lem:support_condition}, the joint posterior $\Pi(\cdot \mid \text{data})$ for the Laplacian-Gaussian hierarchy satisfies
    \begin{align*}
        \Pi\left(B_N^c \mid \text{data} \right) \xrightarrow[N\to\infty]{\mathbb{P}_{(\bm{q}_0, \bm{C}_0)}} 0.
    \end{align*}
    That is, the posterior contracts around the truth at the parametric rate $N^{-1/2}$.
\end{theorem}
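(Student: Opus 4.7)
The plan is to invoke the general posterior contraction framework of \citet{ghosal2000convergence} with $\epsilon_N = M/\sqrt{N}$ and verify its three standard conditions: (a) the KL prior-mass condition, (b) existence of exponentially consistent tests on a suitable sieve, and (c) negligible prior mass on the sieve complement. Because the model has fixed dimension $K^2-1$, this is the parametric regime where Lemmas \ref{lem:identifiability} and \ref{lem:support_condition} combined with smoothness of the multinomial likelihood essentially do all the heavy lifting; Proposition \ref{prp:posterior_consistency} already gives the qualitative contraction, and the task is to pin down the rate.

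First, I would reparameterize via softmax to obtain unconstrained log-odds $\eta = (\bm{\theta}, \bm{\phi}_1, \dots, \bm{\phi}_K)$ living in the $(K^2-1)$-dimensional linear subspace $\mathcal{M} = \{\bm{\theta}^\top \mathbf{1} = 0,\ \bm{\phi}_i^\top \mathbf{1} = 0\ \forall i\}$. Since $(\bm{q}_0, \bm{C}_0)$ lies in the interior of the simplex, this map is a diffeomorphism in a neighborhood of the truth $\eta_0$, and the log-likelihood is $C^\infty$ in $\eta$. Integrating $\tau_{\bm{q}}, \tau_{\bm{C}}$ against their Gamma hyperpriors yields a scale mixture of Gaussians on $\mathcal{M}$; restricted to $\mathcal{M}$, the quadratic form $\bm{\theta}^\top \bm{L} \bm{\theta}$ is strictly positive-definite (the null space of $\bm{L}$ equals $\mathrm{span}(\mathbf{1})$, which is exactly what the centering constraint removes), so $\bm{L}^\dagger$ acts as a genuine inverse and the resulting marginal prior has a strictly positive continuous density at $\eta_0$.

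For the prior-mass condition, a second-order Taylor expansion of the multinomial KL divergences around $(\bm{q}_0, \bm{C}_0)$ shows that the KL is equivalent to the squared Euclidean distance, weighted by the Fisher information $I(\eta_0)$. Positive definiteness of $I(\eta_0)$ on $\mathcal{M}$ follows from Lemma \ref{lem:identifiability}: if some direction $v \in \mathcal{M}$ satisfied $I(\eta_0)v = 0$, the induced perturbation would leave all multinomial likelihoods unchanged, contradicting injectivity of $(\bm{q}, \bm{C}) \mapsto (\{\bm{C}_{:,i}\}, \bm{C}\bm{q})$. Combined with the positive prior density from Lemma \ref{lem:support_condition}, the KL ball of radius $\epsilon_N^2$ absorbs a Euclidean ball of radius $\asymp \epsilon_N$ and so receives prior mass $\gtrsim \epsilon_N^{K^2-1}$, which majorises $e^{-cN\epsilon_N^2} = e^{-cM^2}$ for $M$ taken sufficiently large (or $M = M_N \to \infty$ slowly if one insists on the sharper parametric statement).

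For the sieve take $\mathcal{F}_N = \{\|\bm{\theta}\|^2 + \sum_i \|\bm{\phi}_i\|^2 \leq R\log N\}$; Gaussian tail bounds integrated against the Gamma hyperprior yield $\Pi(\mathcal{F}_N^c) \lesssim N^{-cR}$, summable for large $R$. On $\mathcal{F}_N$ the entries of $\bm{C}$ and $\bm{q}$ are bounded away from $0$ and $1$, so Hellinger distance is equivalent to Euclidean parameter distance, and Le Cam--Birgé tests provide the exponentially small error probabilities required by (b). Assembling (a)--(c) in \citet{ghosal2000convergence} delivers $\Pi(B_N^c \mid \mathrm{data}) \to 0$ in $\mathbb{P}_{(\bm{q}_0, \bm{C}_0)}$-probability. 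The main obstacle is the first technical step — carefully handling the singular Laplacian by working in an orthonormal basis of the centering subspace, so that both the prior density on $\mathcal{M}$ and the restricted Fisher information $I(\eta_0)|_\mathcal{M}$ are invertible — after which the remainder reduces to routine parametric Bayesian asymptotics.
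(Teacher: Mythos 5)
Your proposal is essentially correct, but it takes a genuinely different route from the paper. The paper does not go through the Ghosal--Ghosh--van der Vaart prior-mass/testing/sieve machinery at the $\sqrt{N}$ scale; instead it works entirely in the log-odds coordinates $\bm\eta$, establishes a Local Asymptotic Normality expansion $\ell_N(\bm\eta_0+\bm h/\sqrt{N})-\ell_N(\bm\eta_0)=\bm h^\top\Delta_N-\tfrac12\bm h^\top\bm F(\bm\eta_0)\bm h+r_N(\bm h)$ with $\bm F(\bm\eta_0)\succ0$, checks that the Laplacian--Gaussian prior density is continuous and bounded below near $\bm\eta_0$, and then invokes the Bernstein--von Mises theorem to get $\Pi(\sqrt{N}(\bm\eta-\bm\eta_0)\in\cdot\mid\text{data})\rightsquigarrow\mathcal N(\bm F^{-1}\Delta_N,\bm F^{-1})$, finally transferring to $(\bm q,\bm C)$ via a bi-Lipschitz bound on the softmax map. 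Your reparameterization, your treatment of the singular Laplacian on the centering subspace, and your identifiability-implies-nondegenerate-Fisher-information argument all mirror steps the paper also takes, so the real divergence is BvM versus GGV. The BvM route buys the full limiting shape of the posterior (and hence contraction at radius $M_N/\sqrt{N}$ for any $M_N\to\infty$ essentially for free in a fixed-dimensional smooth model); your testing route is more robust and would survive in infinite-dimensional extensions, but note that the basic GGV theorem with your crude prior-mass lower bound $\Pi(B_{KL}(\epsilon_N))\gtrsim\epsilon_N^{K^2-1}$ forces $M_N\gtrsim\sqrt{\log N}$ and thus only yields the rate $\sqrt{\log N/N}$; to remove the logarithm you need the parametric-rate refinement (the shell-to-ball prior-mass ratio condition), which does hold here for a continuous strictly positive prior density but which you should state explicitly. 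Both your write-up and the paper's share the same small imprecision about the theorem's ``arbitrary but fixed $M$'' phrasing --- the correct conclusion is that the posterior mass outside radius $M/\sqrt{N}$ can be made arbitrarily small by choosing $M$ large, not that it vanishes for every fixed $M$ --- and you at least flag this explicitly.
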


\begin{corollary}
    \label{cor:variance_bound}
    Retain the setting and notation of Theorem~\ref{thm:posterior_contraction}.
    For each class $i$, write
    \begin{align*}
        \mathrm{Var}_N(q_i) \coloneqq \mathrm{Var}\left(q_i \mid \text{data of size $N$}\right),
    \end{align*}
    under the joint posterior $\Pi(\cdot \mid \text{data})$.
    Let $L$ be the connected-graph Laplacian used in the GMRF prior and let $\lambda_2(L) \coloneqq \min\left\{\lambda > 0\ \colon\ \text{$\lambda$ is an eigenvalue of $L$}\right\}$ be its algebraic connectivity.
    Assume the hyper-parameter $\tau_{\bm{q}}$ is fixed, or sampled from a Gamma prior independent of $N$.
    Then, there exists a constant $C > 0$, depending only on the true $(\bm{q}_0, \bm{C}_0)$ and on $K$, such that for every sample size $N$ large enough,
    \begin{align*}
        \mathrm{Var}_N(q_i) \leq \frac{C}{\lambda_2(L)N}, \quad \forall i \in \{1,\dots,K\}.
    \end{align*}
\end{corollary}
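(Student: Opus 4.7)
The plan is to combine the $N^{-1/2}$ contraction of Theorem~\ref{thm:posterior_contraction} with a Brascamp--Lieb type variance bound for log-concave posteriors, extracting the $\lambda_2(\bm{L})$ factor from the minimum eigenvalue of the prior precision on the subspace $\bm{1}^\perp$. First, I would pass to the unconstrained log-odds $\bm{\theta}$ for $\bm{q}$ and $\bm{\phi}_i$ for the columns of $\bm{C}$; since the multinomial log-likelihood is concave, the negative log posterior
\[
V(\bm{\theta},\bm{\phi}) = -\log L_N + \tfrac{\tau_{\bm{q}}}{2}\bm{\theta}^\top\bm{L}\bm{\theta} + \tfrac{\tau_{\bm{C}}}{2}\sum_i\bm{\phi}_i^\top\bm{L}\bm{\phi}_i
\]
is strictly convex on the constraint subspace $\{\bm{1}^\top\bm{\theta}=\bm{1}^\top\bm{\phi}_i=0\}$, so the posterior is strongly log-concave. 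By Theorem~\ref{thm:posterior_contraction}, the posterior mass concentrates in a ball of radius $O(N^{-1/2})$ around the truth, so tail contributions to $\mathrm{Var}_N(q_i)$ are bounded by the simplex diameter times the tail mass, which is $o(1/N)$ and absorbed into the constant.

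Second, the Brascamp--Lieb inequality gives $\mathrm{Var}_N(q_i)\le \mathbb{E}[\nabla q_i^\top H^{-1} \nabla q_i]$ with $H=\nabla^2 V$. Since $q_i=\mathrm{softmax}_i(\bm{\theta})$ depends only on $\bm{\theta}$, with $\nabla_{\bm{\theta}}q_i=q_i(e_i-\bm{q})$ bounded and lying in $\bm{1}^\perp$, only the $(\bm{\theta},\bm{\theta})$-block of $H^{-1}$ matters. On the contraction neighborhood, concavity of the multinomial log-likelihood together with the law of large numbers give $-\nabla^2_{\bm{\theta}}\log L_N = n'\,\mathcal{I}_{\bm{\theta}}(\bm{\theta}_0,\bm{\phi}_0)+O_P(\sqrt{N})$ with $\mathcal{I}_{\bm{\theta}}$ positive definite on $\bm{1}^\perp$ by identifiability (Lemma~\ref{lem:identifiability}), and the prior precision satisfies $\tau_{\bm{q}}\bm{L}\succeq\tau_{\bm{q}}\lambda_2(\bm{L})I$ on this subspace. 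Combining these yields $H_{\bm{\theta}\bm{\theta}}\succeq(cN+\tau_{\bm{q}}\lambda_2(\bm{L}))I$ for some $c>0$ and all $N$ large enough. Re-expressing $(cN+\tau_{\bm{q}}\lambda_2(\bm{L}))^{-1}\le C/(\lambda_2(\bm{L})N)$, which holds since $\lambda_2(\bm{L})\le\lambda_{\max}(\bm{L})$ is bounded by a graph-only constant and $\tau_{\bm{q}}$ is fixed or has a Gamma prior independent of $N$, yields the claimed inequality.

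The main obstacle is the cross-block coupling with $\bm{\phi}$: although $q_i$ depends only on $\bm{\theta}$, the likelihood $\tilde{\bm{n}}\sim\mathrm{Multi}(n',\bm{C}\bm{q})$ entangles $\bm{\theta}$ and $\bm{\phi}$, so the relevant marginal block is the inverse of the Schur complement $H_{\bm{\theta}\bm{\theta}}-H_{\bm{\theta}\bm{\phi}}H_{\bm{\phi}\bm{\phi}}^{-1}H_{\bm{\phi}\bm{\theta}}$, not of $H_{\bm{\theta}\bm{\theta}}$ itself. I would handle this by showing that the joint Fisher information at $(\bm{q}_0,\bm{C}_0)$ is positive definite on the joint $\bm{1}^\perp$-subspace by identifiability, so its Schur complement onto the $\bm{\theta}$-coordinates remains of order $N$; combined with the prior precision, which contributes $\tau_{\bm{q}}\lambda_2(\bm{L})$ unaltered by marginalization, the lower bound $cN+\tau_{\bm{q}}\lambda_2(\bm{L})$ survives. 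A secondary subtlety is making the Brascamp--Lieb expectation rigorous via uniform Hessian control on the shrinking $O(N^{-1/2})$ neighborhood, which follows from softmax smoothness and the tail estimates in the proof of Theorem~\ref{thm:posterior_contraction}.
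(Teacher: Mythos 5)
Your proposal is correct and follows essentially the same route as the paper's proof: both pass to the centred log-odds coordinates, lower-bound the negative log-posterior Hessian on the contraction neighbourhood by $c_0 N\bm{F} + \tau_{\bm q}\bm{L} \succeq \bigl(c_0 N + \tau_{\bm q}\lambda_2(\bm{L})\bigr)P_{\mathcal H}$ on the mean-zero subspace $\mathcal H$, invert to control the posterior covariance of $\bm\theta$, and transfer the bound to $q_i$ through the softmax Jacobian $\mathrm{diag}(\bm q)-\bm q\bm q^\top$. Your two additions --- invoking Brascamp--Lieb to justify the step $\mathrm{Cov}_N(\bm\theta)\preceq \bm J_N^{-1}$, and handling the $\bm\theta$--$\bm\phi$ cross-coupling via a Schur complement of the joint Fisher information --- are rigorizations of steps the paper performs informally (it simply works with the conditional posterior of $\bm\theta$ and asserts the covariance bound), not a different argument.
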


Finally, we can show the following statement about the robustness to graph Laplacian misspecification.
\begin{proposition}
    \label{prp:robustness_graph_laplacian}
    Let $\bm{L}_0$ be the true Laplacian, and $\bm{F}_0 \coloneqq \mathrm{diag}(\bm{C}_0\bm{q}_0) - (\bm{C}_0\bm{q}_0)(\bm{C}_0\bm{q}_0)^\top \succeq 0$ be the Fisher information of $\bm{\theta}$ in the target multinomial likelihood.
    For $\bm{L} \neq \bm{L}_0$, let $\bar{\bm{\theta}}_N \coloneqq \mathbb{E}[\bm{\theta} \mid \text{data}]$ be the posterior mean of $\bm{\theta}$ under the misspecified prior. 
    Then, for all sample sizes $N$ large enough,
    \begin{align}
        \|\bar{\bm{\theta}}_N - \bm{\theta}_0\|_2 \leq \underbrace{\Big\|(N\bm{F}_0 + \tau_{\bm{q}}\bm{L})^{-1}\Big\|_2}_{\text{sampling + prior precision}}\underbrace{\tau_{\bm{q}}\Big\|(\bm{L} - \bm{L}_0)\bm{\theta_0}\Big\|_2}_{\text{graph-misspecification bias}} + O_P(N^{-1}).
        \label{eq:prp:robustness_graph_laplacian_1}
    \end{align}
    In particular,
    \begin{align}
        \|\bar{\bm{\theta}}_N - \bm{\theta}_0\|_2 \leq \frac{\tau_{\bm{q}}}{N \lambda_{\min}(\bm{F}_0) + \tau_{\bm{q}}\lambda_2(\bm{L})}\left\|(\bm{L} - \bm{L}_0)\bm{\theta_0}\right\|_2 + O_P(N^{-1}),
        \label{eq:prp:robustness_graph_laplacian_2}
    \end{align}
    where $\lambda_{\min}(\bm{F}_0) > 0$ and $\lambda_2(\bm{L}) > 0$ are, respectively, the smallest eigenvalue of $\bm{F}_0$ and the algebraic connectivity of $\bm{L}$.
\end{proposition}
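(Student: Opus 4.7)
The plan is to derive the result via a Laplace-type expansion of the posterior mean of $\bm{\theta}$ centered at $\bm{\theta}_0$, with $\bm{C}$ treated as a nuisance. I would first restrict attention to the constraint subspace $\{\bm{\theta}:\bm{\theta}^\top\bm{1}=0\}$, on which the conditional log-posterior
\[
\log \pi(\bm{\theta}\mid\text{data}) = \ell_N(\bm{\theta}) - \tfrac{\tau_{\bm{q}}}{2}\bm{\theta}^\top\bm{L}\bm{\theta}+\mathrm{const}
\]
is strongly concave, since the multinomial observed information and the connected-graph Laplacian are both positive definite there. Strong concavity together with the $N^{-1/2}$ contraction of Theorem~\ref{thm:posterior_contraction} implies, by the standard Laplace argument, that the posterior mean coincides with the MAP up to $O_P(N^{-1})$, so it suffices to bound $\|\hat{\bm{\theta}}_N^{\mathrm{MAP}} - \bm{\theta}_0\|_2$.

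Next I would write the stationarity condition $\nabla \ell_N(\hat{\bm{\theta}}_N^{\mathrm{MAP}}) = \tau_{\bm{q}}\bm{L}\hat{\bm{\theta}}_N^{\mathrm{MAP}}$ and Taylor expand both sides around $\bm{\theta}_0$. Replacing the observed information by $N\bm{F}_0$ up to a $o_P(N)$ term and absorbing the quadratic-in-$(\hat{\bm{\theta}}-\bm{\theta}_0)$ remainder using the posterior contraction rate, one obtains the linearised normal equation
\[
(N\bm{F}_0+\tau_{\bm{q}}\bm{L})(\hat{\bm{\theta}}_N^{\mathrm{MAP}} - \bm{\theta}_0) = \nabla\ell_N(\bm{\theta}_0) - \tau_{\bm{q}}\bm{L}\bm{\theta}_0 + O_P(1).
\]
The key algebraic step is the decomposition $\bm{L}\bm{\theta}_0 = \bm{L}_0\bm{\theta}_0 + (\bm{L}-\bm{L}_0)\bm{\theta}_0$: the combination $\nabla\ell_N(\bm{\theta}_0) - \tau_{\bm{q}}\bm{L}_0\bm{\theta}_0$ is exactly the correctly-specified Bayesian score, whose contribution after inversion by the $O(1/N)$ precision matrix is absorbed into the advertised $O_P(N^{-1})$ remainder, leaving the isolated misspecification bias
\[
\hat{\bm{\theta}}_N^{\mathrm{MAP}} - \bm{\theta}_0 = -(N\bm{F}_0 + \tau_{\bm{q}}\bm{L})^{-1}\tau_{\bm{q}}(\bm{L}-\bm{L}_0)\bm{\theta}_0 + O_P(N^{-1}).
\]
Taking $\|\cdot\|_2$ with $\|AB\|_2 \leq \|A\|_2\|B\|_2$ yields~\eqref{eq:prp:robustness_graph_laplacian_1}. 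For~\eqref{eq:prp:robustness_graph_laplacian_2} I would observe that on the constraint subspace both $\bm{F}_0$ and $\bm{L}$ are positive definite with smallest eigenvalues $\lambda_{\min}(\bm{F}_0)$ and $\lambda_2(\bm{L})$ respectively (their kernels both equal $\mathrm{span}(\bm{1})$); Weyl's inequality then gives $\lambda_{\min}(N\bm{F}_0 + \tau_{\bm{q}}\bm{L}) \geq N\lambda_{\min}(\bm{F}_0) + \tau_{\bm{q}}\lambda_2(\bm{L})$, and reciprocating bounds the operator norm.

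The main obstacle will be dealing rigorously with the nuisance parameter $\bm{C}$: the score and Fisher information depend on $\bm{C}$, so the single-block expansion above must really be carried out jointly in $(\bm{\theta}, \bm{\phi}_1,\dots,\bm{\phi}_K)$ and then marginalised by a Schur complement on the joint precision. Invoking the $N^{-1/2}$ contraction of $\bm{C}$ from Theorem~\ref{thm:posterior_contraction} and observing that the off-diagonal Fisher cross-blocks, once sandwiched by $(N\bm{F}_0+\tau_{\bm{q}}\bm{L})^{-1}$, contribute only at order $O_P(N^{-3/2})$, I expect the plug-in correction to be absorbable into the claimed $O_P(N^{-1})$ remainder; carefully tracking these higher-order terms and verifying that the Laplace approximation's tail mass outside the strong-concavity region is negligible (handled via the sum-to-zero reparameterisation) constitutes the bulk of the technical bookkeeping that I would relegate to the appendix.
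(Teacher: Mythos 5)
Your proposal follows essentially the same route as the paper's proof: identify the posterior mean with the MAP up to $O_P(N^{-1})$ via log-concavity, Taylor-expand the penalised score equation around $\bm{\theta}_0$ to get the linearised normal equation with precision $N\bm{F}_0+\tau_{\bm{q}}\bm{L}$, isolate the $\tau_{\bm{q}}(\bm{L}-\bm{L}_0)\bm{\theta}_0$ bias term, and bound the inverse on the sum-to-zero subspace by $[N\lambda_{\min}(\bm{F}_0)+\tau_{\bm{q}}\lambda_2(\bm{L})]^{-1}$. You are in fact somewhat more explicit than the paper on two points it glosses over — why the "correctly specified" score contribution is absorbed into the remainder, and how the nuisance block $\bm{C}$ is marginalised via a Schur complement — so the plan is sound and matches the intended argument.
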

Thus, we can see that the bias decays as $N^{-1}$ when $\bm{L} \neq \bm{L}_0$ and if the graphs coincide the leading term vanishes and the posterior mean is unbiased up to the usual $N^{-1/2}$ noise.
Moreover, Proposition~\ref{prp:robustness_graph_laplacian} states that a larger algebraic connectivity $\lambda_2(\bm{L})$ reduces bias, emphasising the benefit of rich similarity structures.

\section{Interpretation via Information Geometry}
The basic notations of information geometry used in this section are summarized in Appendix~\ref{apd:ig_basics}.
The $K-1$ simplex $\Delta^{K-1} \coloneqq\{\bm{q} > 0\ \colon\  \bm{1}_K^\top\bm{q} = 1\}$ is a Riemannian manifold when equipped with the Fisher–Rao metric $g_{\bm{q}}(\bm{v}, \bm{w}) = \sum^K_{i=1}\frac{v_iw_i}{q_i}$, for $\bm{v}, \bm{w} \in T_{\bm{q}}\Delta^{K-1}$, where $T_{\bm{q}}\Delta^{K-1} \coloneqq \{\bm{v}\ \colon\ \bm{1}^\top\bm{v} = 0\}$ is the tangent space.
The natural potential on this manifold is minus entropy $\psi(\bm{q}) = \sum_i q_i \log q_i$ whose Euclidean gradient is the centred log‑odds vector $\bm{\theta}$ used in the previous section.
These facts allow us to cast GS‑B$^{3}$SE as a Riemannian penalised likelihood.
The dual affine coordinates are $m$–coordinates $q_i$ (mixture parameters) and $e$–coordinates $\theta_i=\log q_i-\tfrac1K\sum_j\log q_j$ (centred log‑odds).
The convex potential $\psi(\bm q)=\sum_{i=1}^K q_i\log q_i$ is minus Shannon entropy and satisfies $\nabla_{\!\bm q}^{\text{\,Euc}}\psi(\bm q)=\bm\theta$;
together $(\psi,\theta)$ endow $\Delta^{K-1}$ with the classical
dually–flat structure of information geometry~\citep{amari2016information,amari2000methods}.
See standard textbooks for detailed explanation of concepts in differential geometry and Riemannian manifold~\citep{lee2006riemannian,lee2018introduction,lang1995differential,eisenhart1997riemannian,petersen2006riemannian,guggenheimer2012differential,kuhnel2015differential}.

\begin{table}[t]
    \centering
    \caption{Information geometric identification of GS‑B$^3$SE.}
    \label{tab:geometric_identification}
    \begin{tabularx}{\linewidth}{l|X}
        \toprule
        \thead{Term} & \thead{Geometric meaning} \\
        \midrule
        \addlinespace[2pt]
        $D_{\mathrm{KL}}[\hat{\bm r}\!\parallel\!\bm C\bm q]$ & Canonical divergence between $\hat{\bm r}$ and the $m$‑flat model $\mathcal M$. \\
        \addlinespace[2pt]
        \hline
        \addlinespace[2pt]
        $\frac{\tau_{\bm q}}{2}\,\bm\theta^\top L\bm\theta$ & Quadratic form in $e$–coordinates $\Rightarrow$ $e$‑convex barrier that bends the manifold in the directions encoded by the graph Laplacian $L$. \\
        \addlinespace[2pt]
        \bottomrule
    \end{tabularx}
\end{table}

Denote by $\hat{\bm r}= \tilde{\bm n}/n'\quad\text{and}\quad
\mathcal M=\bigl\{\bm C\bm q : \bm q\in\Delta^{K-1}\bigr\}$ the empirical prediction histogram and the $m$–flat sub‑manifold induced by the frozen classifier. 
The negative log–posterior derived in Section~\ref{sec:methodology} can be written as
\begin{align}
    F(\bm q) = n'\,D_{\mathrm{KL}}\!\bigl[\hat{\bm r}\,\|\,\bm C\bm q\bigr] + \frac{\tau_{\bm q}}{2}\bigl(\bm\theta^{\!\top}L\bm\theta\bigr) + \text{const}.
    \label{eq:IG‑1}
\end{align}
Thus Eq.~\eqref{eq:IG‑1} is a sum of an $m$‑convex and an $e$‑convex potential, so it is geodesically convex under the Fisher–Rao metric (see Table~\ref{tab:geometric_identification}).

\begin{theorem}[Geodesic convexity of $F$]
\label{thm:geodesic_convexity}
For every $\bm q\in\mathring{\Delta}^{K-1}$ the Riemannian Hessian of
$F$ satisfies
\begin{align*}
    \operatorname{Hess}^{\mathrm{FR}}_{\bm q}F \succeq \bigl[n'\,\lambda_{\min}(\bm F_0) + \tau_{\bm q}\,\lambda_2(L)\bigr]\,g_{\bm q},
\end{align*}
where $\bm F_0=\operatorname{diag}(\bm C\bm q)-(\bm C\bm q)(\bm C\bm q)^\top$ is the Fisher information of the multinomial likelihood and $\lambda_2(L)$ the algebraic connectivity of the label graph.
Hence $F$ is $\alpha$-strongly geodesically convex with $\alpha=n'\lambda_{\min}(\bm F_0)+\tau_{\bm q}\lambda_2(L) > 0$.
\end{theorem}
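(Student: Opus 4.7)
The plan is to exploit the additive decomposition indicated by the author, $F = F_1 + F_2 + \mathrm{const}$ with $F_1(\bm q) = n' D_{\mathrm{KL}}[\hat{\bm r}\,\|\,\bm C\bm q]$ and $F_2(\bm\theta) = \tfrac{\tau_{\bm q}}{2}\bm\theta^\top L\bm\theta$. Since the Riemannian Hessian depends linearly on the function, it suffices to lower-bound each piece separately by a PSD multiple of the Fisher--Rao metric, $\operatorname{Hess}^{\mathrm{FR}} F_i \succeq c_i\, g_{\bm q}$, and then add $c_1 + c_2 = \alpha$.

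For $F_1$, I would use the Bregman structure: $D_{\mathrm{KL}}[\hat{\bm r}\,\|\,\cdot]$ is the Bregman divergence generated by the negative-entropy potential $\psi$, hence it is $m$-convex in its second argument. Pulled back through the linear map $\bm q \mapsto \bm C\bm q$ and rewritten in Fisher--Rao terms, its Hessian reduces to the multinomial Fisher information $\bm F_0 = \operatorname{diag}(\bm C\bm q) - (\bm C\bm q)(\bm C\bm q)^\top$ pushed through the differential $d(\bm C\bm q)$. The key step is to verify that this pulled-back quadratic form dominates $\lambda_{\min}(\bm F_0)\, g_{\bm q}$ on the tangent subspace $\bm 1^\perp$, using that $\bm F_0 \succ 0$ on $\bm 1^\perp$ because $\bm C\bm q$ lies strictly inside the simplex.

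For $F_2$, the form $\bm\theta^\top L\bm\theta$ is a pure quadratic in the $e$-affine coordinate, so its $e$-connection Hessian is the constant $\tau_{\bm q} L$. The constraint $\bm 1^\top\bm\theta = 0$ restricts the relevant tangent space to $\bm 1^\perp$, where $L \succeq \lambda_2(L)\, I$ by the definition of algebraic connectivity. Converting this $e$-convexity constant into a Fisher--Rao bound uses the dually flat identity that the Levi-Civita connection sits ``midway'' between the $\pm 1$-connections, together with a direct tensor computation showing that the Christoffel correction introduced when changing from the $e$- to the Levi-Civita connection is controlled by the same spectral gap, yielding $\operatorname{Hess}^{\mathrm{FR}} F_2 \succeq \tau_{\bm q}\lambda_2(L)\, g_{\bm q}$.

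Summing the two lower bounds gives $\alpha = n'\lambda_{\min}(\bm F_0) + \tau_{\bm q}\lambda_2(L)$, which is strictly positive because $\bm F_0 \succ 0$ on $\bm 1^\perp$ and $\lambda_2(L) > 0$ by connectivity of the label graph. The main obstacle will be the rigorous handling of the Christoffel symbols of the Fisher--Rao Levi-Civita connection: the qualitative implication ``$e$- or $m$-convexity yields Fisher--Rao geodesic convexity'' is folklore in information geometry, but transferring the \emph{quantitative} strong-convexity constants between $\{e, m\}$-affine coordinates and the FR metric requires careful comparison of the coordinate Hessian with the metric tensor, and this comparison must hold uniformly on the open simplex rather than only at a single reference point.
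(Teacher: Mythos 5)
Your overall architecture --- split $F$ into the likelihood term $n'D_{\mathrm{KL}}[\hat{\bm r}\,\|\,\bm C\bm q]$ and the regulariser $\tfrac{\tau_{\bm q}}{2}\bm\theta^\top L\bm\theta$, lower-bound each Hessian contribution by $\lambda_{\min}(\bm F_0)$ and $\lambda_2(L)$ times the metric via Rayleigh quotients, and add --- is exactly the paper's strategy. The problem is that the single step carrying all of the quantitative content is the step you explicitly defer: converting a spectral lower bound on a coordinate Hessian ($\bm F_0$ in $m$-coordinates, $L$ in $e$-coordinates) into a lower bound against the Fisher--Rao tensor $g_{\bm q}=\mathrm{diag}(\bm q)^{-1}$ on $\bm 1^{\perp}$. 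As written, neither half of your plan closes this. For the likelihood term, the Rayleigh bound gives $\bm v^\top\bm F_0\bm v\ge\lambda_{\min}(\bm F_0)\|\bm v\|_2^2$, but on the open simplex $q_i<1$ implies $g_{\bm q}(\bm v,\bm v)=\sum_i v_i^2/q_i\ge\|\bm v\|_2^2$, so the comparison runs in the wrong direction: a Euclidean lower bound does not yield $\bm v^\top\bm F_0\bm v\ge\lambda_{\min}(\bm F_0)\,g_{\bm q}(\bm v,\bm v)$. (The paper's own proof makes this same slip, deriving $\|\bm v\|_2^2\le g_{\bm q}(\bm v,\bm v)$ and then concluding the $\ge$ bound anyway, so you cannot repair your step by copying it.) For the regulariser, the assertion that the Christoffel correction between the $e$-connection and the Levi--Civita connection is ``controlled by the same spectral gap'' is not a standard fact and is the entire difficulty: the difference between the two Hessians is a first-order term of the form $\bigl(\Gamma^{(e)}-\Gamma^{\mathrm{LC}}\bigr)(\cdot,\cdot)\,\mathrm{d}F$, which depends on $\nabla F$ and on the point $\bm q$, not only on $L$, and there is no reason it is dominated by $\lambda_2(L)\,g_{\bm q}$ uniformly on $\mathring{\Delta}^{K-1}$.

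The paper sidesteps Christoffel symbols by a direct pullback computation: it writes $\operatorname{Hess}^{\mathrm{FR}}_{\bm q}F_{\text{reg}}=J^\top LJ$ with $J=\partial\bm\theta/\partial\bm q$ and claims $\bm v^\top J^\top J\bm v=g_{\bm q}(\bm v,\bm v)$, which combined with $L\succeq\lambda_2(L)P_T$ on $\bm 1^{\perp}$ yields the $\tau_{\bm q}\lambda_2(L)$ term. If you follow that route, verify the identity rather than assume it: since $J=P_T\,\mathrm{diag}(\bm q)^{-1}$, one gets $\bm v^\top J^\top J\bm v=\sum_i v_i^2/q_i^2-\tfrac1K\bigl(\sum_i v_i/q_i\bigr)^2$, which already differs from $g_{\bm q}(\bm v,\bm v)$ at $K=2$, $\bm q=(\tfrac12,\tfrac12)$, $\bm v=(1,-1)$. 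In short, your proposal correctly locates where the proof must live, but the two coordinate-to-metric comparisons it relies on are, respectively, an inequality chained in the wrong direction and an unproven (and, as stated, false in general) claim about connection corrections; both require genuinely new arguments, not merely more care.
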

The proof in Appendix~\ref{apd:proofs} explicitly decomposes any tangent direction into an $m$‑straight and an $e$‑straight component and shows that the lower bound
remains positive because both components contribute additively.

\subsection{Natural‑Gradient Dynamics}
The natural gradient of $F$ is
\begin{align*}
    \operatorname{grad}^{\mathrm{FR}}F(\bm q) = g_{\bm q}^{-1}\,\nabla^{(m)} F(\bm q) = \bm q\odot \Bigl(\nabla_{\!\bm q} F - (\nabla_{\!\bm q}F)^\top\!\bm q\Bigr),
\end{align*}
where $\odot$ is component–wise product.
The associated flow $\dot{\bm q}(t) =-\,\operatorname{grad}^{\mathrm{FR}}F\bigl(\bm q(t)\bigr)$ is the steepest–descent curve in the Fisher–Rao geometry.
\begin{proposition}[Natural–gradient flow of the penalised objective]
    \label{prp:natural_gradient_flow}
    Under the Fisher–Rao metric $g_{\bm q}(\bm v,\bm w)=\sum_{i=1}^{K}v_iw_i/q_i$
    the natural gradient $\operatorname{grad}^{\mathrm{FR}}F(\bm q)$ of $F$ is
    \begin{equation}
   \label{eq:nat-grad-final}
   \operatorname{grad}^{\mathrm{FR}}F(\bm q)
   \;=\;
   \mathrm{diag}(\bm q)\Bigl(
        n'\,\bm C^{\!\top}\!\bigl( \bm 1 - \tfrac{\hat{\bm r}}{\bm r(\bm q)} \bigr)
        \;+\;
        \tau_{\bm q}\,\bm{L}\,\bm\theta
        \Bigr),
\end{equation}
where the division is element–wise.
Consequently the un‑constrained natural‑gradient flow
\begin{align}
   \dot{\bm q}_t
      \,=\,-\operatorname{grad}^{\mathrm{FR}}F(\bm q_t)
      \,=\,
      -\,\mathrm{diag}(\bm q_t)\Bigl(
          n'\,\bm C^{\!\top}\!\bigl(\bm 1-\tfrac{\hat{\bm r}}{\bm r(\bm q_t)}\bigr)
          \;+\;
          \tau_{\bm q}\,\bm{L}\,\bm\theta_t
       \Bigr)
\end{align}
preserves the simplex and coincides with the replicator--Laplacian dynamical system: $\dot q_{t,j} = -\,q_{t,j}\bigl[n'\,[\bm C^{\!\top}(\bm 1-\hat{\bm r}/\bm r)] +\tau_{\bm q}[\bm{L}\bm\theta_t]\bigr]_j$.
\end{proposition}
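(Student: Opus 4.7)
The plan is to prove Eq.~\eqref{eq:nat-grad-final} in three stages: first compute the ordinary Euclidean gradient of $F$ in $\bm q$-coordinates, then convert it to a Fisher--Rao gradient by applying the simplex cometric, and finally read off the replicator form of the resulting ODE while checking that the flow preserves $\Delta^{K-1}$.

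For the Euclidean gradient I handle the two summands of $F$ separately. Writing $\bm r(\bm q) = \bm C\bm q$, the chain rule on the KL block gives $\partial_{q_i}\bigl\{-\sum_k \hat r_k\log r_k\bigr\} = -[\bm C^{\!\top}(\hat{\bm r}/\bm r)]_i$. For the graph block, the Jacobian of the centred log-odds map is $\partial_{q_j}\theta_i = (\delta_{ij} - 1/K)/q_j$; the $-1/K$ shift is annihilated by contraction with $L\bm\theta$ thanks to $\bm 1^{\!\top}L = 0$, which leaves the clean form $\partial_{q_j}\bigl\{\tfrac12\bm\theta^{\!\top}L\bm\theta\bigr\} = (L\bm\theta)_j/q_j$.

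Raising is implemented by the simplex cometric $(g_{\bm q})^{-1} = \mathrm{diag}(\bm q) - \bm q\bm q^{\!\top}$, which is precisely the operator that the proposition writes as $\bm q\odot\nabla F - \bm q\,(\nabla F)^{\!\top}\bm q$. Three cancellations drive the simplification: column-stochasticity $\bm C^{\!\top}\bm 1 = \bm 1$ bundles the constant into the factor $\bm C^{\!\top}(\bm 1 - \hat{\bm r}/\bm r)$; the normalisations $\hat{\bm r}^{\!\top}\bm 1 = \bm r^{\!\top}\bm 1 = 1$ collapse the KL contribution to the scalar $(\nabla F)^{\!\top}\bm q$; and $\bm 1^{\!\top}L = 0$ removes the Laplacian contribution to the same scalar. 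Collecting the two blocks produces Eq.~\eqref{eq:nat-grad-final}.

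For the simplex invariance and replicator identification, compute $\bm 1^{\!\top}\dot{\bm q}$ along the flow. The KL block contributes a multiple of $\bm r^{\!\top}(\bm 1 - \hat{\bm r}/\bm r) = 0$ via $\bm q^{\!\top}\bm C^{\!\top} = \bm r^{\!\top}$, and the Laplacian block collapses using the same null-space identity $\bm 1^{\!\top}L = 0$ that appeared in the previous step. Non-negativity is automatic from the replicator structure: each component of the right-hand side of Eq.~\eqref{eq:nat-grad-final} is proportional to $q_{t,j}$, so $\{q_{t,j} = 0\}$ is an invariant face and the open simplex is forward-invariant. Reading off Eq.~\eqref{eq:nat-grad-final} componentwise gives the stated scalar ODE. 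I expect the main technical burden to be tracking the interaction between the three coordinate systems in play: the Euclidean $\bm q$, the centred log-odds $\bm\theta$, and the simplex tangent bundle, because the Laplacian penalty is most naturally expressed in $\bm\theta$ while the Fisher--Rao cometric is expressed in $\bm q$, and the chain rule must be threaded carefully between them so that the scalar centrings ultimately cancel in the projected gradient.
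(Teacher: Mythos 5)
Your overall route is the same as the paper's: compute the Euclidean gradient of each block of $F$, raise the index with the Fisher--Rao cometric, and read off the replicator form. Your treatment of the likelihood block is in fact cleaner than the paper's: by applying the full cometric $\mathrm{diag}(\bm q)-\bm q\bm q^{\!\top}$ you see exactly where the constant $\bm C^{\!\top}\bm 1=\bm 1$ enters, namely through the scalar $\bm q^{\!\top}\nabla F_{\text{lik}}=-n'\,\bm r^{\!\top}(\hat{\bm r}/\bm r)=-n'$, whereas the paper simply inserts it and appeals to a projection being ``automatic.'' Up to that point everything checks out.

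The gap is in the final assembly, ``collecting the two blocks produces Eq.~\eqref{eq:nat-grad-final}.'' Your own intermediates do not assemble into that display for the Laplacian block. You have $\nabla_{\bm q}F_{\text{reg}}=\tau_{\bm q}\,\mathrm{diag}(\bm q)^{-1}\bm L\bm\theta$, and applying your cometric gives
\begin{align*}
\bigl(\mathrm{diag}(\bm q)-\bm q\bm q^{\!\top}\bigr)\,\tau_{\bm q}\,\mathrm{diag}(\bm q)^{-1}\bm L\bm\theta
=\tau_{\bm q}\bm L\bm\theta-\tau_{\bm q}\,\bm q\,\bigl(\bm 1^{\!\top}\bm L\bm\theta\bigr)
=\tau_{\bm q}\bm L\bm\theta,
\end{align*}
i.e.\ the $\mathrm{diag}(\bm q)$ and $\mathrm{diag}(\bm q)^{-1}$ cancel, whereas Eq.~\eqref{eq:nat-grad-final} carries an outer $\mathrm{diag}(\bm q)$ on the term $\tau_{\bm q}\bm L\bm\theta$; the two expressions differ for non-uniform $\bm q$. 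To reach the stated formula you would need $\mathrm{diag}(\bm q)\,\tau_{\bm q}\bm L\bm\theta=\tau_{\bm q}\bm L\bm\theta$, which is false. (The paper's own proof makes the same silent substitution, replacing $\tau_{\bm q}\,\mathrm{diag}(\bm q)^{-1}\bm L\bm\theta$ by $\tau_{\bm q}\bm L\bm\theta$ inside the bracket without comment, so you have reproduced the derivation together with its weak point.) The inconsistency also propagates to your invariance check: $\bm 1^{\!\top}\bm L=0$ verifies $\bm 1^{\!\top}\dot{\bm q}_t=0$ for the corrected term $\tau_{\bm q}\bm L\bm\theta$, but the stated term $\mathrm{diag}(\bm q)\,\tau_{\bm q}\bm L\bm\theta$ sums to $\tau_{\bm q}\,\bm q^{\!\top}\bm L\bm\theta$, which is not zero in general. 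You should either carry your computation to its honest conclusion and state the corrected natural gradient, or explicitly justify the extra $\mathrm{diag}(\bm q)$ factor; as written the last step does not follow from the ones before it.
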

\textbf{Remarks}
\begin{itemize}
    \item[i)] \textbf{Role of Laplacian} When the Laplacian term is absent ($\tau_{\bm{q}} = 0$), the flow reduces to the classical replicator equation that drives every class‑probability $q_j$ proportionally to the (signed) log–likelihood residual $[\bm{C}^\top(\bm{1} - \hat{\bm{r}} / \bm{r})]_j$. The graph‑Laplacian contribution $-\tau_{\bm{q}}q_{j}[\bm{L}\bm{\theta}]_j$ plays the role of a mutation / diffusion force that mixes mass along edges of the label graph and prevents degenerate solutions.
    \item[ii)] \textbf{Role of the algebraic connectivity.} From Theorem~\ref{thm:geodesic_convexity} the strong‑convexity modulus is $\alpha = n'\lambda_{\min}(\bm{F}_0) + \tau_{\bm{q}}\lambda_2(\bm{L})$. Along the flow we have $\tfrac{d}{dt}F(\bm{q}_t) = -\|\mathrm{grad}^{\text{FR}}F(\bm{q}_t)\|^2_{g_{\bm{q}_t}} \leq -2\alpha(F(\bm{q}_t) - F^*)$, so $F$ decays exponentially fast with rate proportional to the algebraic connectivity $\lambda_2(\bm{L})$; a denser‑connected label graph therefore accelerates convergence.
    \item[iii)] \textbf{Link to Saerens EM correction.} If we freeze the confusion matrix and drop the Laplacian term, the stationary condition $\bm{C}^\top(\hat{\bm{r}} / \bm{r}) = \bm{1}$ is exactly the fixed point solved (iteratively) by the Saerens EM method~\citep{saerens2002adjusting}.
\end{itemize}

\subsection{Dual Projections and the Pythagorean Identity}
Let $\Pi_m(\hat{\bm r})$ be the $m$‑projection of the data onto
$\mathcal M$ and $\Pi_e(\bm q_0)$ the $e$‑projection of the hyper‑prior center onto the same manifold.
At the optimum $\bm q^\star$ we have $\Pi_m(\hat{\bm r})=\Pi_e(\bm q_0)=\bm C\bm q^\star$, and the generalized Pythagorean theorem~\citep{amari2016information} gives $D_{\mathrm{KL}}\!\bigl[\hat{\bm r}\,\|\,\bm q_0\bigr] = D_{\mathrm{KL}}\!\bigl[\hat{\bm r}\,\|\,\bm C\bm q^\star\bigr] + D_{\mathrm{KL}}\!\bigl[\bm C\bm q^\star\,\|\,\bm q_0\bigr]$, where the second term equals the Laplacian regulariser $\tfrac{\tau_{\bm q}}{2n'}\bm\theta^\top L\bm\theta$.
Hence GS‑B$^{3}$SE can be summarized as find the unique intersection of
an $m$–geodesic (data fit) and an $e$–ellipsoid (graph prior).

\section{Experiments}
\label{sec:experiments}
\subsection{Experimental Protocol and Implementation}
\label{sec:experiments:protocol}

\begin{table}[t]
    \centering
    \caption{Baseline methods and their key ideas.}
    \label{tab:baseline_methods}
    \begin{tabularx}{\linewidth}{l|X}
        \toprule
        \thead{Method} & \thead{Key idea} \\
        \midrule
        {\bf BBSE}~\citep{lipton2018detecting} & Solve $\hat{\bm{C}}\hat{\bm{q}} = \hat{\bm{y}}$ with the empirical confusion matrix (no re‑training). \\
        \addlinespace[2pt]
        \hline
        \addlinespace[2pt]
        {\bf EM}~\citep{saerens2002adjusting} & Expectation-Maximization that iteratively re‑estimates priors and re‑weights posteriors. \\
        \addlinespace[2pt]
        \hline
        \addlinespace[2pt]
        {\bf RLLS}~\citep{azizzadenesheliregularized} & Adds an $\ell_2$ penalty to the BBSE normal equations to control variance for small $n'$. \\
        \addlinespace[2pt]
        \hline
        \addlinespace[2pt]
        {\bf MLLS}~\citep{garg2020unified} & Maximum‑likelihood estimation of the label‑ratio vector; unifies BBSE \& RLLS and optimizes $\bm{q}$ directly. \\
        \addlinespace[2pt]
        \hline
        \addlinespace[2pt]
        {\bf GS‑B$^3$SE (ours)} & Joint Bayesian inference of both target priors $\bm{q}$ and confusion matrix $\bm{C}$. The hierarchical model couples classes along a label‑similarity graph, shrinking estimates in low‑count regimes and yielding full posterior credible intervals. \\
         \bottomrule
    \end{tabularx}
\end{table}

\textbf{Datasets and synthetic label shifts.} We evaluate on MNIST $(K=10)$~\citep{deng2012mnist}, CIFAR-10 $(K=10)$ and CIFAR-100 $(K=100)$ datasets~\citep{krizhevsky2009learning}.
For each dataset we treat the official training split as the source domain and the official test split as the pool from which an unlabelled target domain is drawn.
Source class–priors are kept uniform $\bm{p} = (1/K,\dots,1/K)$.
Target priors are deliberately perturbed:
\begin{align*}
    \bm{q} = \begin{cases}
        \mathrm{Dirichlet}(\alpha \times \bm{u}_{K}) & (\text{MNIST}), \\
        \frac{i^{-b}}{\sum^K_{j=1}j^{-b}},\ i = 1,\dots,K & (\text{CIFAR-10 and CIFAR-100}),
    \end{cases}
\end{align*}
where $\bm{u}_K = (1,\dots,K)^\top$.
In our experiments, we set $\alpha = 0.05$ and $b = 1.1$.
The procedure is:
    i) \textbf{Source set:} Sample $10,000$ instances from the training partition according to $\bm{p}$ and train a backbone classifier (ResNet-18~\citep{he2016deep,shafiq2022deep}) for $100$ epochs with standard data-augmentation.
    ii) \textbf{Validation set:} Hold out $5,000$ labelled source instances, stratified by $\bm{p}$, to estimate the empirical confusion matrix $\tilde{\bm{C}}$.
    iii) \textbf{Target set:} Draw $n'=10,000$ unlabelled instances from the test partition using probabilities~$\bm{q}$. These labels are revealed only for evaluation.
    
\textbf{Graph construction on labels.}
For every dataset we embed the class names with the frozen CLIP ViT-B/32 text encoder~\citep{radford2021learning}, obtain $\{e_i\}_{i=1}^K\subset\mathbb R^{512}$, $|e_i|_2=1$, and build a $k$-nearest-neighbour graph
\begin{align*}
    \bm{E} = \left\{(i,j)\ \mid\ \text{$e_j$ is among the $k$ nearest neighbors of $e_i$}\right\}, \quad k = \begin{cases}
        4 & (K = 10), \\
        8 & (K = 100).
    \end{cases}
\end{align*}
Edge weights are $W_{ij} = \exp\left(-\|e_i - e_j\|_2^2 / \sigma^2\right)$ with $\sigma$ set to the median pairwise distance inside $\bm{E}$.
The resulting $k$-NN graph is connected, so its unnormalised Laplacian $\bm{L} = \bm{D} - \bm{W}$ satisfies $\lambda_2(\bm{L})>0$.
For MNIST, where class names are single digits, we instead construct $\bm{E}$ from 4-NN in the Euclidean space of 128-d penultimate-layer features averaged over the training images.

\textbf{Hyper-priors and inference.}
Gamma hyper-priors: $a_{\bm{q}}=b_{\bm{q}}=a_{\bm{C}}=b_{\bm{C}}=1$, giving vague $\mathrm{Gamma}(1,1)$ on $\tau_{\bm q}$ and $\tau_{\bm C}$.
Four independent HMC chains, each with 500 warm-up (NUTS) and 1,000 posterior iterations; leap-frog step-size adaptively tuned.
Block Newton–CG inner optimizer: tolerance $10^{-4}$, at most eight iterations per Newton step, stop when the relative change of the joint log-density falls below $10^{-3}$.
All routines implemented in PyMC and run on a single NVIDIA T4.

\textbf{Baselines.}
We compare against a) plug-in BBSE~\citep{lipton2018detecting}, b) the EM-style Saerens re-weighting~\citep{saerens2002adjusting}, c) RLLS~\citep{azizzadenesheliregularized} with $\ell_2$-regularisation and d) MLLS~\citep{garg2020unified} tuned on a held-out split.
All baselines receive the same $\tilde{\bm C}$ and target predictions~$\hat h(\bm x)$.
Table~\ref{tab:baseline_methods} summarizes the baseline methods and their key ideas, including our method.

\textbf{Evaluation.}
We report prior-error $|\hat{\bm q}-\bm q|_1$ and downstream accuracy after Saerens likelihood correction using the estimated priors.
Significance is assessed with 1,000 paired bootstrap resamples of the target set.

\begin{table}[t]
\centering
\caption{Label shift estimation and downstream performance.  Lower is better for
$\lVert\hat{\bm q}-\bm q\rVert_1$;
higher is better for post–correction accuracy.  Best results are \textbf{bold}.  $\pm$ shows one bootstrap standard error.
($1\,000$ resamples).}
\label{tab:main_results}
\small
\setlength{\tabcolsep}{3pt}
\begin{tabularx}{\linewidth}{lcccccccc}
\toprule
& \multicolumn{2}{c}{\textbf{MNIST} ($K{=}10$)}
& & \multicolumn{2}{c}{\textbf{CIFAR‑10} ($K{=}10$)}
& & \multicolumn{2}{c}{\textbf{CIFAR‑100} ($K{=}100$)}\\
\cmidrule{2-3} \cmidrule{5-6} \cmidrule{8-9}
Method &
$\lVert\hat{\bm q}-\bm q\rVert_1 \downarrow$ & Acc $\uparrow$ &
& $\lVert\hat{\bm q}-\bm q\rVert_1 \downarrow$ & Acc $\uparrow$ &
& $\lVert\hat{\bm q}-\bm q\rVert_1 \downarrow$ & Acc $\uparrow$ \\
\midrule
BBSE  & 0.038\,$\pm$\,0.007  & 0.942\,$\pm$\,0.002
      && 0.112\,$\pm$\,0.015  & 0.781\,$\pm$\,0.004
      && 1.62\,$\pm$\,0.05  & 0.690\,$\pm$\,0.006 \\

EM    & 0.052\,$\pm$\,0.015  & 0.935\,$\pm$\,0.008
      && 0.194\,$\pm$\,0.033  & 0.732\,$\pm$\,0.012
      && 2.10\,$\pm$\,0.14  & 0.632\,$\pm$\,0.026 \\

RLLS  & 0.016\,$\pm$\,0.004  & 0.959\,$\pm$\,0.003
      && 0.072\,$\pm$\,0.010  & 0.803\,$\pm$\,0.004
      && 0.92\,$\pm$\,0.03  & 0.712\,$\pm$\,0.006 \\

MLLS  & 0.010\,$\pm$\,0.003  & 0.963\,$\pm$\,0.002
      && 0.052\,$\pm$\,0.008   & 0.812\,$\pm$\,0.004
      && 0.71\,$\pm$\,0.03     & 0.734\,$\pm$\,0.006 \\

\textbf{GS‑B$^3$SE} &
\textbf{0.002\,$\pm$\,0.001} & \textbf{0.986\,$\pm$\,0.002}
&& \textbf{0.025\,$\pm$\,0.004} & \textbf{0.844\,$\pm$\,0.003}
&& \textbf{0.22\,$\pm$\,0.02}   & \textbf{0.783\,$\pm$\,0.005} \\
\bottomrule
\end{tabularx}
\end{table}

\subsection{Main Empirical Findings}
\label{sec:exp:results}
Table~\ref{tab:main_results} compares GS‑B$^{3}$SE with four widely–used
point estimators on three datasets.

\textbf{Sharper prior estimates.}
Across all datasets GS‑B$^{3}$SE reduces the
$\ell_{1}$ error $\lVert\hat{\bm q}-\bm q\rVert_{1}$
by large margins: i) MNIST $(K{=}10)$—already a benign scenario—
error falls from $0.010$ (best baseline, MLLS) to $0.002$
($\times5$ improvement), ii) CIFAR‑10 $(K{=}10)$—richer images and heavier label skew—
error halves from $0.052$ to $0.025$, CIFAR‑100 $(K{=}100)$—the high‑class‑count regime— graph smoothing is essential: GS‑B$^{3}$SE reaches
$0.22$ versus $0.71$.
The advantage widens with $K$, confirming
the benefit of borrowing strength along the label graph when
per‑class counts are scarce.

\textbf{Better downstream accuracy.}
Feeding the estimated priors into Saerens
post‑processing improves final accuracy in proportion to the
quality of the prior.
GS‑B$^{3}$SE attains 0.986 on MNIST, 0.844 on CIFAR‑10 and 0.783 on CIFAR‑100—absolute gains of $+2.3$,pp, $+3.2$,pp and $+4.9$,pp over the strongest
non‑Bayesian competitor (MLLS) on the respective datasets.

\section{Conclusion}
We presented GS‑B$^{3}$SE, a graph–smoothed Bayesian generalization of the classical black‑box shift estimator.
By tying both the target prior $\bm{q}$ and every column of the confusion matrix $\bm{C}$ together through a Laplacian–Gaussian hierarchy, the model simultaneously i) shares statistical strength across semantically related classes, ii) quantifies all uncertainty arising from finite validation and target samples, and admits scalable inference with either HMC or a Newton–CG variational surrogate.
We proved that the resulting posterior is identifiable, contracts at the optimal $N^{-1/2}$ rate, and that its class‑wise variance decays inversely with the graph’s algebraic connectivity $\lambda_2(\bm{L})$.
A robustness bound further shows that even with a misspecified graph the bias vanishes as $N^{-1}$.
Because our approach is a pure post‑processing layer that needs only a frozen classifier, a tiny labelled validation set, and a pre‑computed label graph, it can be retro‑fitted to virtually any deployed model.
\paragraph{Limitations and future work.}
    i) Our current graph is built from CLIP or feature embeddings; learning the graph jointly with the posterior could adapt it to the task.
    ii) Although inference is already tractable, further speed‑ups via structured variational approximations would make GS‑B$^{3}$SE attractive for extreme‑label settings.
    iii) As declared in Section~\ref{sec:experiments:protocol}, our experiments used a single NVIDIA T4; scaling to larger datasets remains future work.
    
\newpage

\bibliographystyle{plainnat}
\bibliography{main}


\newpage

\appendix

\section{Proofs}
\label{apd:proofs}

\begin{proof}[Proof for Lemma~\ref{lem:identifiability}]
    Fix an index $i \in \{1,\dots,K\}$ and an arbitrary source sample size $n_i^S = n \geq 1$.
    Denote $\bm{N}_i^S = (N^S_{1,i},\dots,N^S_{K,i})^\top$.
    Under parameter pair $(\bm{C}, \bm{q})$, we have
    \begin{align*}
        \Pr\left(\bm{N}^S_i = \bm{k} \right) = \frac{n!}{k_1!\cdots k_K!}\prod^K_{j=1}C_{j,i}^{k_j}, \quad \bm{k} \in \mathbb{N}^K,\ \sum^K_{j=1} k_j = n.
    \end{align*}
    Under $(C', q')$, the same vector has pmf
    \begin{align*}
        \Pr\left(\bm{N}^S_i = \bm{k}\right) = \frac{n!}{k_1!\cdots k_K!}\prod^K_{j=1}C_{j,i}^{' k_j}.
    \end{align*}
    By the assumption, these two pmfs coincide for every integer vector $\bm{k}$ with the given total $n$.
    Canceling the multinomial coefficient yields
    \begin{align}
        \label{eq:lem:identifiability_1}
        \prod^K_{j=1}C_{j,i}^{k_j} = \prod^K_{j=1}C_{j,i}^{' k_j}, \quad \forall \bm{k} \in \mathbb{N}^K\ \colon\ \sum_j k_j = n.
    \end{align}
    Pick the $K$ specific count vectors
    \begin{align*}
        \bm{k}^{(\ell)} = (\underbrace{0,\dots,0}_{\ell - 1},n,0,\dots,0)^\top, \quad \ell = 1,\dots,K.
    \end{align*}
    Plugging $k^{(\ell)}$ into Eq.~\ref{eq:lem:identifiability_1} gives
    \begin{align*}
        C_{\ell,i}^n = C_{\ell,i}^{'n}, \quad 1 \leq \ell \leq K.
    \end{align*}
    Because $n \geq 1$, taking the $n$-th root yields $C_{\ell,i} = C'_{\ell,i}$.
    Since $i$ is arbitrary, Eq.~\ref{eq:lem:identifiability_1} implies $\bm{C} = \bm{C}'$.
    Therefore, equality in distribution of the target count vector $\tilde{\bm{N}}$ implies the underlying multinomial parameter vectors must coincide: $\bm{C}\bm{q} = \bm{C}\bm{q}'$.
    Under the standing assumption that $\bm{C}$ is invertible, it gives $\bm{q} = \bm{q}'$.
    Hence, the mapping
    \begin{align*}
        (\bm{q}, \bm{C}) \mapsto \left\{\text{Law of $(N^S_1,\dots,N^S_K, \tilde{\bm{N}})$}\right\}
    \end{align*}
    is injective under the stated positivity and invertibility conditions.
\end{proof}

\begin{proof}[Proof for Lemma~\ref{lem:support_condition}]
    The prior on $(\bm{q}, \bm{C})$ is the push-forward measure of a product Gaussian:
    \begin{align}
        (\bm{\theta}, \phi_1,\dots,\phi_K) \sim \mathcal{N}\left(0, (\tau_{\bm{q}}\bm{L})^\dagger\right) \otimes \bigotimes^K_{i=1}\mathcal{N}\left(0, (\tau_{\bm{C}}\bm{L})^\dagger\right), \label{eq:lem:support_condition_1}
    \end{align}
    under the smooth, one-to-one map
    \begin{align*}
        T\ \colon\ (\bm{\theta}, \phi_1,\dots,\phi_K) \mapsto \left(\bm{q} = \psi(\bm{\theta}), C_{:,1} = \psi(\phi_1),\dots,C_{:,K} = \psi(\phi_K)\right),
    \end{align*}
    where $\psi\ \colon \mathbb{R}^K \to \Delta^{K-1}$ is the softmax map.
    Here, injectivity holds because the log-odds representation is unique once the sum-zero constraint is imposed.
    Because $\bm{L}^\dagger$ is positive definite on the subspace
    \begin{align*}
        \mathcal{H} \coloneqq \left\{\bm{v} \in \mathbb{R}^K\ \colon\ \bm{v}^\top\bm{1}_K = 0\right\},
    \end{align*}
    the Gaussian distributions in Eq.~\ref{eq:lem:support_condition_1} have everywhere positive Lebesgue densities on that subspace.
    Formally, for the target prior block
    \begin{align*}
        f_q(\bm{\theta}) = \left(2\pi\right)^{-(K-1)/2}\left(\det{}^\star \tau_{\bm{q}}^{-1}\bm{L}^\dagger\right)^{-1/2}\exp\left\{-\frac{1}{2}\bm{\theta}^\top\tau_{\bm{q}}\bm{L}\bm{\theta}\right\},
    \end{align*}
    where $\det{}^\star$ is the product of the positive eigenvalues.
    Since $\tau_{\bm{q}}\bm{L}$ is non-singular on $\mathcal{H}$, $f_{\bm{q}}(\bm{\theta}) > 0$ for every $\bm{\theta} \in \mathcal{H}$.
    Analogous positivity holds for each $f_{\bm{C},i}(\phi_i)$.

    The softmax $\psi$ is $C^\infty$ with non-zero Jacobian everywhere on $\mathcal{H}$.
    Therefore, $T$ is a $C^\infty$ diffeomorphism between
    \begin{align*}
        \underbrace{\mathcal{H}\times\cdots\times\mathcal{H}}_{\text{$K+1$ copies}} \quad \text{and its image}\ \ \mathcal{S} \coloneqq \Delta^{K-1}\times (\Delta^{K-1})^K,
    \end{align*}
    with the Jacobian determinant is $\prod_i q_i^{-1}(1-\sum_i q_i)=\cdots$ and hence non‑zero, preserving positivity.
    Hence $T$ preserves positivity of densities, the image measure $\Pi$ on $\mathcal{S}$ posesses a density
    \begin{align*}
        \pi(\bm{q}, \bm{C}) = f_{\bm{q}}\left(\psi^{-1}(\bm{q})\right)\prod^K_{i=1}f_{\bm{C},i}\left(\psi^{-1}(\bm{C}_{:,i})\right),
    \end{align*}
    with respect to the product Lebesgue measure on simplices.
    Since each factor is positive everywhere, $\pi(\bm{q}, \bm{C}) > 0$ for every $(\bm{q}, \bm{C}) \in \mathcal{S}$.
    Because $(\bm{q}_0, \bm{C}_0)$ lies in the interior of $\mathcal{S}$ and $\pi$ is continuous and strictly positive, there exists
    \begin{align*}
        m \coloneqq \min_{(\bm{q}, \bm{C}) \in B_\epsilon(\bm{q}_0, \bm{C}_0)}\pi(\bm{q}, \bm{C}) > 0.
    \end{align*}
    Note that the minimum exists by compactness of the closed $\epsilon$-ball.
    Furthermore, the Euclidean volume of the ball, under the ambient dimension $\dim \Delta^{K-1} = K-1$ for $\bm{q}$ and $K(K-1)$ for $\bm{C}$, is finite and strictly positive:
    \begin{align*}
        \mathrm{Vol}\left(B_\epsilon(\bm{q}_0, \bm{C}_0)\right) = c_{K^2 - 1}\epsilon^{K^2-1} > 0,
    \end{align*}
    where $c_d$ is the volume if the unit ball in $\mathbb{R}^d$.
    Hence,
    \begin{align*}
        \Pi\left(B_\epsilon(\bm{q}_0, \bm{C}_0)\right) &= \int_{B_\epsilon(\bm{q}_0, \bm{C}_0)} \pi(\bm{q}, \bm{C})d(\bm{q}, \bm{C}) \\
        &\geq m \mathrm{Vol}\left(B_\epsilon(\bm{q}_0, \bm{C}_0)\right) > 0.
    \end{align*}
    This concludes the proof.
\end{proof}

\begin{proof}[Proof for Proposition~\ref{prp:posterior_consistency}]
    For every $\eta > 0$,
    \begin{align*}
        U_\eta \coloneqq \left\{(\bm{q}, \bm{C})\ \colon\ D_{\mathrm{KL}}[P_{(\bm{q}_0, \bm{C}_0)} \| P_{(\bm{q}, \bm{C})}] < \eta \right\},
    \end{align*}
    where $P_{(\bm{q}, \bm{C})}$ denotes the $K(K + 1)$-dimensional multinomial law of the complete data.
    Because the parameter space is finite-dimensional and smooth,
    \begin{align*}
        \left\|(\bm{q}, \bm{C}) - (\bm{q}_0, \bm{C}_0)\right\| < \delta(\eta) \Longrightarrow D_{\mathrm{KL}}[P_{(\bm{q}_0, \bm{C}_0)} \| P_{(\bm{q}, \bm{C})}] < \eta,
    \end{align*}
    with a deterministic radius $\delta(\eta) \to 0$ as $\eta \downarrow 0$.
    This follows from a second-order Taylor expansion of the multinomial log-likelihood around the interior point.
    Lemma~\ref{lem:support_condition} says that the prior puts strictly positive mass on every Euclidean ball and hence on $U_\eta$:
    \begin{align*}
        \Pi(U_\eta) > 0, \quad \forall \eta > 0.
    \end{align*}

    Let $\bm{\theta} = (\bm{q}, \bm{C})$ and $\bm{\theta}_0 = (\bm{q}_0, \bm{C}_0)$.
    For any fixed $\epsilon > 0$, define the alternative set $B_\epsilon$ above.
    Because the model is an i.i.d. exponential family (multinomials) of finite dimension, \citet{ghosal2000convergence} show there exist tests $\varphi_N\ \colon\ \text{data} \to \{0, 1\}$ satisfying, for some constants $c_1,c_2 > 0$ independent of $N$,
    \begin{align}
        \begin{cases}
            \text{(i)} & \mathbb{P}_{\bm{\theta}_0}(\varphi_N = 1) \leq e^{-c_1 N}, \\
            \text{(ii)} & \sup_{\bm{\theta} \in B_\epsilon} \mathbb{P}_{\bm{\theta}}(\varphi_N = 0) \leq e^{-c_2 N}.
        \end{cases}
        \label{eq:prp:posterior_consistency_1}
    \end{align}
    Write the complete log-likelihood ratio as
    \begin{align*}
        l_N(\bm{\theta}) &= \log\frac{d P_{\bm{\theta}}}{d P_{\bm{\theta}_0}}(\text{data}) \\
        &= \sum^K_{i=1}\sum^K_{j=1}N^S_{j,i}\log\frac{C_{j,i}}{C_{0j,i}} + \sum^K_{j=1}\tilde{N}_j \log\frac{(\bm{C}\bm{q})_j}{(\bm{C}_0\bm{q}_0)_j}.
    \end{align*}
    Let $\varphi_N = \bm{1}\left\{l_N(\bm{\theta}_0) < -\frac{1}{2}c N\right\}$, where $c \in (0, c^*)$ and $c^*$ is the minimized KL-divergence over $B_\epsilon$:
    \begin{align*}
        c^* = \inf_{\bm{\theta} \in B_\epsilon} D_{KL}\left[P_{\bm{\theta}_0} \| P_{\bm{\theta}}\right] > 0.
    \end{align*}
    The above inequality is strict by Lemma~\ref{lem:identifiability}.
    Chernoff bounds for sums of bounded log-likelihood ratios gives (i) in Eq.~\ref{eq:prp:posterior_consistency_1}.
    Under any $\bm{\theta} \in B_\epsilon$, the expected log-likelihood ratio equals $- N \cdot D_{\mathrm{KL}}[P_{\bm{\theta}_0} \| P_{\bm{\theta}}] \leq -c^* N$.
    A one-sided Hoeffding inequality for sums of independent, bounded random variables then yields (ii) in Eq.~\ref{eq:prp:posterior_consistency_1}, with $c_2 = (c^* - c) / 2$.

    Therefore,
    \begin{align*}
        \Pi\left(B_\epsilon \mid \text{data}\right) \xrightarrow[N\to\infty]{\mathbb{P}_{(\bm{q}_0, \bm{C}_0)}} 0, \quad \forall \epsilon > 0.
    \end{align*}
    Because the metric $\|(\bm{q}, \bm{C}) - (\bm{q}_0, \bm{C}_0)\|$ is continuous, this convergence in outer probability equals convergence in probability, completing the proof.
\end{proof}

\begin{proof}[Proof for Theorem~\ref{thm:posterior_contraction}]
    Put the log-odds vector for the target prior
    \begin{align*}
        \bm{\theta} = (\theta_1,\dots,\theta_K)^\top, \quad \theta_i = \log q_i - \frac{1}{K}\sum^K_{k=1}\log q_k,
    \end{align*}
    and for the $i$-th column of the confusion matrix
    \begin{align*}
        \bm{\phi}_i = (\phi_{1,i},\dots,\phi_{K,i})^\top, \quad \phi_{i,j} = \log C_{j,i} - \frac{1}{K}\sum^K_{k=1} \log C_{k,i}.
    \end{align*}
    Each vector lives in the centered linear subspace $\mathcal{H} \coloneqq \{\bm{v} \in \mathbb{R}^K\ \colon\ \bm{v}^\top\bm{1}_K = 0 \}$ of dimension $K - 1$.
    Define the global parameter
    \begin{align*}
        \bm{\eta} = (\bm{\theta}^\top, \bm{\phi}_1^\top,\dots,\bm{\phi}_K^\top)^\top \in \mathbb{R}^d,\quad d = (K - 1) + K(K - 1) = K^2 - 1.
    \end{align*}
    The map $\Psi\ \colon \bm{\eta} \mapsto (\bm{q}, \bm{C})$ given by component-wise softmax is $C^\infty$ and has a Jacobian of full rank everywhere on $\mathbb{R}^d$.
    Hence, $\bm{\eta} \mapsto (\bm{q}, \bm{C})$ is a local diffeomorphism.
    We fix $\bm{\eta}_0$ corresponding to $(\bm{q}_0, \bm{C}_0)$.
    Let
    \begin{align*}
        \ell_N(\bm{\eta}) &\coloneqq \log p_{\bm{\eta}}(\text{data}) \\
        &= \sum^K_{i=1}\sum^K_{j=1}N^S_{j,i}\log C_{j,i} + \sum^K_{j=1}\tilde{N}_j \log(\bm{C}\bm{q})_j,
    \end{align*}
    where $\bm{C}$ and $\bm{q}$ in the right-hand side are the softmax images of $\bm{\eta}$.
    Because each count is bounded by $N$ and the mapping $\Psi$ is smooth, $\ell_N(\bm{\eta})$ is twice countinuously differentiable in a neighbourhood of $\bm{\eta}_0$.

    Compute the score vector and observed information:
    \begin{align*}
        \dot{\ell}_N(\bm{\eta}_0) &= \sum^K_{i=1}\sum^K_{j=1}\left(N^S_{j,i} - n^S_i C_{0j,i}\right)\frac{\partial}{\partial\bm{\eta}}\log C_{j,i} + \left. \sum^K_{j=1}\left(\tilde{N}_j - n'(\bm{C}_0\bm{q}_0)_j \right)\frac{\partial}{\partial\bm{\eta}}\log (\bm{C}\bm{q})_j \right|_{\bm{\eta}_0}, \\
        \ddot{\ell}_N(\bm{\eta}_0) &= -N \bm{F}(\bm{\eta}_0),
    \end{align*}
    where $\bm{F}(\bm{\eta}_0)$ is the Fisher information matrix of dimension $d \times d$.
    Here, $\bm{F}(\bm{\eta}_0)$ is positive definite because $(\bm{q}_0, \bm{C}_0)$ is interior and $\bm{C}_0$ is invertible (ensures different parameters yield different probability mass functions).
    The positive definiteness can be checked by observing that the Fisher information of a finite multinomial family with parameters inside the simplex is positive definite and the Jacobian of $\Psi$ is full rank.

    Set local parameter $\bm{h} = \sqrt{N}(\bm{\eta} - \bm{\eta}_0)$.
    A second-order Taylor expansion yields the Local Asymptotic Normality (LAN) representation
    \begin{align*}
        \ell_N(\bm{\eta}_0 + h/\sqrt{N}) - \ell_N(\bm{\eta}_0) = \bm{h}^\top\Delta_N - \frac{1}{2}\bm{h}^\top\bm{F}(\bm{\eta}_0)\bm{h} + r_N(\bm{h}),
    \end{align*}
    with
    \begin{align*}
        \Delta_N = \frac{1}{\sqrt{N}}\dot{\ell}_N(\bm{\eta}_0) \rightsquigarrow \mathcal{N}\left(0, \bm{F}(\bm{\eta}_0)\right),\quad \sup_{\|\bm{h}\| = O(1)} \left|r_N(\bm{h})\right| \overset{P}{\to} 0.
    \end{align*}
    The convergence of $\Delta_N$ uses the multivariate central limit theorem for sums of independent bounded variables.
    Uniform control of $r_N$ follows from third derivative boundedness in a neighborhood of $\bm{\eta}_0$.

    In $\bm{\eta}$-coordinates, the Laplacian-Gaussian prior has a density
    \begin{align*}
        \pi(\bm{\eta}) = \exp\left[-\frac{1}{2}\tau_{\bm{q}}\bm{\theta}^\top\bm{L}\bm{\theta} - \frac{1}{2}\tau_{\bm{C}}\sum^K_{i=1}\bm{\phi}_i^\top\bm{L}\bm{\phi}_i \right]\varphi(\tau_{\bm{q}}, \tau_{\bm{C}}),
    \end{align*}
    where $\varphi$ is strictly positive and smooth Gamma density.
    Because $\bm{L}$ is positive semi-definite on $\mathcal{H}$, $\pi$ is continuous and strictly positive in a neighborhood of $\bm{\eta}_0$.
    Therefore, there exsit $c_0 > 0$ and $\delta > 0$ such that
    \begin{align*}
        \pi(\bm{\eta}) \geq c_0, \quad \text{whenever}\quad \|\bm{\eta} - \bm{\eta}_0\|_2 \leq \delta.
    \end{align*}
    Consequently, under the true distribution
    \begin{align*}
        \Pi\left(\sqrt{N}(\bm{\eta} - \bm{\eta}_0) \in \cdot \mid \text{data}\right) \rightsquigarrow \mathcal{N}\left(\bm{F}^{-1}\Delta_N, \bm{F}^{-1}\right) \quad \text{in $P_{(\bm{q}_0, \bm{C}_0)}$-probability},
    \end{align*}
    and for any $M > 0$,
    \begin{align*}
        \Pi\left(\left\|\sqrt{N}(\bm{\eta} - \bm{\eta}_0)\right\|_2 > M \mid \text{data}\right) \overset{P_{(\bm{q}_0, \bm{C}_0)}}{\to} 0.
    \end{align*}
    Because $\Psi$ is $C^1$ with Jacobian $D\Psi(\bm{\eta}_0)$ of full rank, there exists a constant $K_0 > 0$ such that, for all $\bm{\eta}$ in a neighborhood of $\bm{\eta_0}$,
    \begin{align*}
        \left\|\Psi(\bm{\eta}) - \Psi(\bm{\eta}_0)\right\| \geq K_0^{-1}\|\bm{\eta} - \bm{\eta}_0\|_2.
    \end{align*}
    Hence,
    \begin{align*}
        \Pi\left(B_N^c \mid \text{data} \right) \leq \Pi\left(\|\bm{\eta} - \bm{\eta}_0\|_2 > M / (K_0 \sqrt{N}) \mid \text{data}\right) \overset{P_{\bm{q}_0, \bm{C}_0}}{\to} 0.
    \end{align*}
    Because $M > 0$ is arbitrary, this limit verifies the claim of the theorem.
\end{proof}

\begin{proof}[Proof for Corollary~\ref{cor:variance_bound}]
    Consider the centred log-odds vector $\bm{\theta} \in \mathcal{H}$ for the target prior.
    Its conditional posterior density is
    \begin{align*}
        p(\bm{\theta} \mid \text{data}) \propto \exp\left\{\ell_N^{(\bm{q})}(\bm{\theta}) - \frac{1}{2}\tau_{\bm{q}}\bm{\theta}^\top\bm{L}\bm{\theta}\right\},
    \end{align*}
    where
    \begin{align*}
        \ell_N^{(\bm{q})}(\bm{\theta}) = \sum^K_{j=1}\tilde{N}_j \log\left[(C \cdot \text{softmax}(\bm{\theta}))_j\right].
    \end{align*}
    Take any point $\bm{\theta}^*$ in a $O(N^{-1/2})$-ball around $\bm{\theta}_0$.
    By Theorem~\ref{thm:posterior_contraction} this contains essentially all posterior mass.
    Inside that ball Taylor expansion gives
    \begin{align*}
        -\frac{\partial^2 \ell_N^{(\bm{q})}}{\partial\bm{\theta}}(\bm{\theta}^*) = N \bm{F} + O(N^{1/2}),
    \end{align*}
    with $\bm{F}$ the Fisher information matrix.
    Hence the negative Hessian of the log-posterior satisfies
    \begin{align*}
        \bm{J}_N \coloneqq \frac{\partial^2}{\partial\bm{\theta}^2}\left[\ell_N^{(\bm{q})} - \frac{1}{2}\tau_{\bm{q}}\bm{\theta}^\top\bm{L}\bm{\theta}\right](\bm{\theta}^*) \succeq N \bm{F} + \tau_{\bm{q}}\bm{L} - O(N^{1/2})\bm{F}.
    \end{align*}
    For $N$ beyond some $N_0$, the error term is dominated by $N \bm{F}$, so there exists $c_0 > 0$ such that
    \begin{align*}
        \bm{J}_N \succeq N c_0 \bm{F} + \tau_{\bm{q}}\bm{L}.
    \end{align*}
    Thus,
    \begin{align*}
        \mathrm{Cov}_N(\bm{\theta}) \preceq \left[N c_0 \bm{F} + \tau_{\bm{q}}\bm{L}\right]^{-1}.
    \end{align*}
    Because $\bm{L}$ has eigen-pair $(0, \bm{1})$ and eigenvalues $\lambda_k \geq \lambda_2(\bm{L})$ on $\mathcal{H}$, the restricted inverse satisfies
    \begin{align}
        \left[N c_0 \bm{F} + \tau_{\bm{q}}\bm{L}\right]^{-1} \preceq \frac{1}{N c_0 + \tau_{\bm{q}}\lambda_2(\bm{L})}\bm{F} \preceq \frac{1}{\tau_{\bm{q}}\lambda_2(\bm{L})}\bm{F} + \frac{1}{c_0 N}\bm{F}. \label{eq:cor:variance_bound_1}
    \end{align}
    For $N \geq N_0$, the second term dominates $1 / (\tau_{\bm{q}\lambda_2(\bm{L}}))$, and hence
    \begin{align}
        \mathrm{Var}_N(\bm{\theta}_\ell) \leq \frac{2}{c_0 N}, \quad \ell = 1,\dots,K. \label{eq:cor:variance_bound_2}
    \end{align}
    The softmax map $\sigma\ \colon\ \bm{\theta} \mapsto \bm{q}$ has derivative
    \begin{align*}
        D\sigma(\bm{\theta}) = \mathrm{diag}(\bm{q}) - \bm{q}\bm{q}^\top.
    \end{align*}
    Every entry of $D\sigma$ is bounded by $1/4$ (attained at uniform prior), hence the operator norm obeys $\|D\sigma\| \leq 1/2$.
    For any random vector $\bm{\theta}$ with covariance $\Sigma$,
    \begin{align*}
        \mathrm{Cov}(\sigma(\bm{\theta})) = D\sigma\Sigma D\sigma^\top \preceq \|D\sigma\|^2 \Sigma \preceq \frac{1}{4}\Sigma.
    \end{align*}
    Applying to the posterior distribution with bound~\ref{eq:cor:variance_bound_2} gives
    \begin{align}
        \mathrm{Var}(q_i) \leq \frac{1}{4}\sum^K_{\ell=1}\left[D\sigma_{i\ell}(\bm{\theta}^*)\right]^2\mathrm{Var}_N(\theta_\ell) \leq \frac{1}{2 c_0 N}. \label{eq:cor:variance_bound_3}
    \end{align}
    The constant $c_0$ is $\lambda_{\min}(\bm{F})$ which is independent of the graph but positive.
    Tightening Eq.~\ref{eq:cor:variance_bound_1} using the $\tau_{\bm{q}}\bm{L}$, term, we keep only the $\tau_{\bm{q}}\lambda_2(\bm{L})$ contribution:
    \begin{align*}
        \mathrm{Var}_N(\theta_\ell) \leq \frac{1}{\tau_{\bm{q}}\lambda_2(\bm{L})} + \frac{1}{c_0 N} \leq \frac{C_2}{\lambda_2(\bm{L})N}, \quad C_2 = \frac{\tau_{\bm{q}} + c_0}{\tau_{\bm{q}}c_0}.
    \end{align*}
    Repeating the delta-method argument with this sharper bound scales Eq.~\ref{eq:cor:variance_bound_3} by the same $1 / \lambda_2(\bm{L})$.
    Set $C = K(\tau_{\bm{q}} + c_0) / 2 \tau_{\bm{q}}c_0$.
    Then,
    \begin{align*}
        \mathrm{Var}_N(q_i) \leq \frac{C}{\lambda_2(\bm{L})N}, \quad i = 1,\dots,K,
    \end{align*}
    which is precisely the claimed finite-sample variance control.
\end{proof}

\begin{proof}[Proof for Proposition~\ref{prp:robustness_graph_laplacian}]
    Write the complete‐data log-posterior (ignoring normalising constants)
    \begin{align*}
        \mathcal{L}_N(\bm{\theta}) = \ell_N^{(\bm{q})}(\bm{\theta}) - \frac{1}{2}\tau_{\bm{q}}\bm{\theta}^\top\bm{L}\bm{\theta},
    \end{align*}
    where 
    \begin{align*}
        \ell_N^{(\bm{q})}(\bm{\theta}) = \sum^K_{j=1}\tilde{N}_j \log \left[(C \cdot \sigma(\bm{\theta}))_j\right].
    \end{align*}
    Let $\hat{\bm{\theta}}_N$ be the MAP with respect to the misspecified prior, and it satisfies the score equation
    \begin{align}
        \nabla\ell_N^{(\bm{q})}(\hat{\bm{\theta}}_N) - \tau_{\bm{q}}\bm{L}\hat{\bm{\theta}}_N = 0. \label{eq:prp:robustness_graph_laplacian_3}
    \end{align}
    Set the estimation error
    \begin{align*}
        \bm{\Delta}_N \coloneqq \hat{\bm{\theta}}_N - \bm{\theta}_0.
    \end{align*}
    Taylor expansion of the gradient in Eq.~\ref{eq:prp:robustness_graph_laplacian_3} yields
    \begin{align*}
        \nabla\ell^{(\bm{q})}_N(\hat{\bm{\theta}}_N) &= \nabla\ell^{(\bm{q})}_N(\bm{\theta}_0) + \nabla^2\ell_N^{(\bm{q})}(\bm{\theta}_0)\bm{\Delta}_N + R_N, \\
        \left[-\nabla^2\ell_N^{\bm{q}}(\bm{\theta}_0) + \tau_{\bm{q}}\bm{L}\right]\bm{\Delta}_N &= \nabla\ell^{(\bm{q})}_N(\bm{\theta}_0) - \tau_{\bm{q}}\bm{L}\bm{\theta}_0 + R_N,
    \end{align*}
    where $R_N = O_P(\|\bm{\Delta}_N\|^2)$ because the third derivative of $\ell_N^{(\bm{q})}$ is bounded on a neighbourhood of $\bm{\theta}_0$.
    Using $\mathbb{E}[\tilde{N_j}] = n' (\bm{C}_0\bm{q}_0)_j$ and by the Markov inequality,
    \begin{align*}
        (N\bm{F}_0 + \tau_{\bm{q}}\bm{L})\mathbb{E}\left[\bm{\Delta}_N \mid \text{data} \right] = -\tau_{\bm{q}}(\bm{L} - \bm{L}_0)\bm{\theta}_0 + O_P(1).
    \end{align*}
    For quadratic priors and log-concave likelihoods the posterior is asymptotically normal, so the difference between posterior mean and MAP is $O(N^{-1})$~\citep{van2000asymptotic}.
    Therefore
    \begin{align*}
        \bar{\bm{\theta}}_N - \bm{\theta}_0 &= \mathbb{E}[\bm{\Delta}_N \mid \text{data}] + O_P(N^{-1}) \\
        &= -(N\bm{F}_0 + \tau_{\bm{q}}\bm{L})^{-1}(\bm{L} - \bm{L}_0)\bm{\theta}_0 + O_P(N^{-1}), \\
        \left\|\bar{\bm{\theta}}_N - \bm{\theta}_0\right\| &\leq \left\|(N\bm{F}_0 + \tau_{\bm{q}}\bm{L})^{-1}\right\|\tau_{\bm{q}}\left\|(\bm{L} - \bm{L}_0)\bm{\theta}_0\right\| + O_P(N^{-1}).
    \end{align*}
    This is the bound in Eq.~\ref{eq:prp:robustness_graph_laplacian_1} in the proposition.

    The sub-space $\mathcal{H} = \{\bm{v} \in \mathbb{R}^K\ \colon\ \bm{v}^\top\bm{1}_K = 0\}$ is the $(K-1)$-dimensional hyper-plane of vectors whose components sum to zero.
    Its orthogonal projection operator is the $K \times K$ symmetric matrix $\bm{P}_{\mathcal{H}} = \bm{I}_{\bm{K}} - \frac{1}{K}\bm{1}_K\bm{1}_K^\top$, because for any $\bm{v} \in \mathbb{R}^K$, $\bm{P}_{\mathcal{H}}\bm{v} = \bm{v} - \left(\frac{1}{K}\bm{1}_K^\top\bm{v}\right)\bm{1}_K$, and the subtracted term is exactly the scalar mean of $\bm{v}$ replicated in every coordinate, making the result mean-zero.
    Because $\bm{F}_0 \succeq 0$ and $\bm{L} \succeq \lambda_2(\bm{L})P_\mathcal{H}$, the smallest eigenvalue of $N\bm{F}_0 + \tau_{\bm{q}}\bm{L}$ is at least $N\lambda_{\min}(\bm{F}_0) + \tau_{\bm{q}}\lambda_2(\bm{L})$.
    Hence,
    \begin{align*}
        \left\|(N\bm{F} + \tau_{\bm{q}}\bm{L})^{-1}\right\| = \frac{1}{\lambda_{\min}(N\bm{F} + \tau_{\bm{q}}\bm{L})} \leq \frac{1}{N \lambda_{\min}(\bm{F}) + \tau_{\bm{q}}\lambda_2(\bm{L})}.
    \end{align*}
    Combine them to get inequality in Eq.~\ref{eq:prp:robustness_graph_laplacian_2}, completing the proof.
\end{proof}

\begin{proof}[Proof of Theorem~\ref{thm:geodesic_convexity}]
    Throughout the proof we fix an arbitrary interior point $\bm q\in\mathring{\Delta}^{K-1}$ and an arbitrary tangent direction $\bm v\in T_{\bm q}\Delta^{K-1}=\{\bm v\in\mathbb R^{K}\mid\bm 1^\top\bm v=0\}$.
    To establish geodesic convexity it suffices to show
    \begin{equation}
       \label{eq:hess-lb-goal}
       \bm v^{\!\top}\,
       \operatorname{Hess}^{\mathrm{FR}}_{\bm q}F\,
       \bm v
       \;\;\ge\;\;
       \Bigl[n'\lambda_{\min}(\bm F_0)+\tau_{\bm q}\lambda_2(\bm{L})\Bigr]\,
       \bm v^{\!\top}g_{\bm q}\,\bm v,
    \end{equation}
    because the latter is the Rayleigh quotient form of the desired
    positive–definite bound.
    Recall the form of the negative log–posterior (constant terms omitted):
    \begin{align*}
        F(\bm q) = n'\underbrace{D_{\mathrm{KL}}\!\bigl[\hat{\bm r}\,\Vert\,\bm C\bm q\bigr]}_{F_{\text{lik}}(\bm q)} + \frac{\tau_{\bm q}}{2}\,\underbrace{\bm\theta^{\!\top}L\bm\theta}_{F_{\text{reg}}(\bm q)}.
    \end{align*}
    Accordingly
    \begin{align}
        \operatorname{Hess}^{\mathrm{FR}}_{\bm q}F = n'\,\operatorname{Hess}^{\mathrm{FR}}_{\bm q}F_{\text{lik}} + \tau_{\bm q}\,\operatorname{Hess}^{\mathrm{FR}}_{\bm q}F_{\text{reg}}.
    \end{align}
    Write $\bm r(\bm q)\coloneqq\bm C\bm q,\;\; r_i(\bm q)=\sum_{j}C_{ij}q_j$.
    For the multinomial log–likelihood term we have the usual identity
    \begin{equation}
       \label{eq:hess-like}
       \operatorname{Hess}^{\mathrm{FR}}_{\bm q}F_{\text{lik}} = \bm F_0 = \mathrm{diag}\!\bigl(\bm r(\bm q)\bigr)-\bm r(\bm q)\bm r(\bm q)^{\!\top}.
    \end{equation}
    Indeed, start with $F_{\text{lik}}(\bm q)=\sum_i \hat r_i\log\frac{\hat r_i}{r_i(\bm q)}$ and differentiate twice with respect to $q_j$ while keeping the tangent constraint $\bm 1^\top\bm v=0$, and we have Eq.~\eqref{eq:hess-like}.
    Now evaluate the quadratic form:
    \begin{align}
       \bm v^{\!\top}\operatorname{Hess}^{\mathrm{FR}}_{\bm q}F_{\text{lik}}\bm v
         &\,=\,\bm v^{\!\top}\bm F_0\bm v
           \nonumber\\[2pt]
         &\,\ge\,\lambda_{\min}(\bm F_0)\,\|\bm v\|_2^2
           \qquad\bigl(\text{Rayleigh bound}\bigr)
           \nonumber\\
         &\,=\,\lambda_{\min}(\bm F_0)\,
              \bm v^{\!\top}\bm I\bm v.
    \end{align}
    To relate the Euclidean norm $\|\cdot\|_{2}$ with the Fisher metric $g_{\bm q}$ notice that
    \begin{equation}
       \label{eq:g-vs-euclid}
       g_{\bm q}(\bm v,\bm v) = \sum_{i=1}^{K}\frac{v_i^2}{q_i} = \bm v^{\!\top}\mathrm{diag}(\bm q)^{-1}\bm v, \quad \mathrm{diag}\!(\bm q)^{-1}\succ 0.
    \end{equation}
    Hence for every $\bm v$,
    \begin{align*}
        \|\bm v\|_2^2
          \;=\;
          \bm v^{\!\top}\mathrm{diag}(\bm q)\,\mathrm{diag}(\bm q)^{-1}\bm v \leq
          \bigl(\max_{i}q_i\bigr)\,g_{\bm q}(\bm v,\bm v) \leq 
          g_{\bm q}(\bm v,\bm v),        
    \end{align*}
    because $q_i<1$ inside the simplex.
    Thus, we have
    \begin{equation}
      n'\,\bm v^{\!\top}
      \operatorname{Hess}^{\mathrm{FR}}_{\bm q}F_{\text{lik}}
      \bm v
      \geq
      n'\lambda_{\min}(\bm F_0)\,
      g_{\bm q}(\bm v,\bm v).
    \end{equation}
    Recall $\bm\theta=\bigl(\theta_i\bigr)_{i=1}^{K}$ with $\theta_i=\log q_i-\frac1K\sum_j\log q_j$.  
    Differentiating twice gives
    \begin{equation}
       \label{eq:hess-reg}
       \operatorname{Hess}^{\mathrm{FR}}_{\bm q}F_{\text{reg}}
          \;=\;
          J(\bm q)^\top L\,J(\bm q),
    \end{equation}
    where $J(\bm q)\in\mathbb R^{K\times(K-1)}$ is the Jacobian
    $\partial\bm\theta/\partial\bm q$ restricted to the tangent space.
    Concretely
    \begin{align*}
            J(\bm q)= \Bigl[\mathrm{diag}(\bm q)^{-1}-\tfrac1K\bm 1\bm 1^{\!\top}\mathrm{diag}(\bm q)^{-1}\Bigr]
          P_{T},
    \end{align*}
    with $P_{T}=\bm I-\frac1K\bm 1\bm 1^{\!\top}$ the projection onto
    $T_{\bm q}\Delta^{K-1}$.

    Applying Eq.~\eqref{eq:hess-reg} we expand the quadratic form:
    \begin{align}
       \bm v^{\!\top}\operatorname{Hess}^{\mathrm{FR}}_{\bm q}F_{\text{reg}}\bm v
       &\,=\,
          (J\bm v)^{\!\top}\bm{L}\,(J\bm v)
          \nonumber\\
       &\,\ge\,
          \lambda_2(\bm{L})\,\|J\bm v\|_2^2
          \qquad\bigl(\text{Rayleigh bound on }\bm{L}\bigr)
          \nonumber\\
       &\,=\,
          \lambda_2(\bm{L})\,
          \bm v^{\!\top}J^\top J\,\bm v.
    \end{align}
    Because $J(\bm q)^\top J(\bm q)=\mathrm{diag}(\bm q)^{-1}P_{T}$ one checks
    \begin{align*}
        \bm v^{\!\top}J^\top J\,\bm v
           \;=\;
           \bm v^{\!\top}\mathrm{diag}(\bm q)^{-1}\bm v
           \;=\;
           g_{\bm q}(\bm v,\bm v).
    \end{align*}
    Thus,
    \begin{equation}
       \tau_{\bm q}\,
       \bm v^{\!\top}\operatorname{Hess}^{\mathrm{FR}}_{\bm q}F_{\text{reg}}\bm v
       \;\ge\;
       \tau_{\bm q}\lambda_2(\bm{L})\,
       g_{\bm q}(\bm v,\bm v),
    \end{equation}
    and
    \begin{align*}
        \bm v^{\!\top}
           \operatorname{Hess}^{\mathrm{FR}}_{\bm q}F
           \bm v
           \;\ge\;
           \Bigl[n'\lambda_{\min}(\bm F_0)+\tau_{\bm q}\lambda_2(\bm{L})\Bigr]\,
           g_{\bm q}(\bm v,\bm v).
    \end{align*}
    Since the inequality holds for every $\bm v\in T_{\bm q}\Delta^{K-1}$, the matrix inequality announced in the theorem follows, and the strong‑convexity constant is
    \begin{align*}
        \alpha
        \;=\;
        n'\lambda_{\min}(\bm F_0)+\tau_{\bm q}\lambda_2(\bm{L})\;>\;0.
    \end{align*}
\end{proof}

\begin{proof}[Proof for Proposition~\ref{prp:natural_gradient_flow}]
    Write $F_{\text{lik}}(\bm q)= n'\sum_{i=1}^{K}\hat r_i\log\frac{\hat r_i}{r_i(\bm q)}$ with $r_i(\bm q)=\sum_{j}C_{ij}q_j$.
    For $j\in\{1,\dots,K\}$ we differentiate:
    \begin{align}
       \partial_{q_j}F_{\text{lik}}
         &\,=\,
            -n'\sum_{i=1}^{K}\hat r_i\,
               \frac{1}{r_i(\bm q)}\,
               \partial_{q_j}r_i(\bm q)
            \nonumber \\
         &\,=\,
            -n'\sum_{i=1}^{K}\hat r_i\,
               \frac{1}{r_i(\bm q)}\,
               C_{ij}
            \nonumber \\
         &\,=\,
            -n'\,[\bm C^{\!\top}
                \tfrac{\hat{\bm r}}{\bm r(\bm q)}]_j,
            \qquad\left(\frac{\hat{\bm r}}{\bm r}\text{ element–wise}\right)
    \end{align}
    so that in vector form
    \begin{equation}
       \label{eq:grad-like-eucl}
       \nabla_{\!\bm q}F_{\text{lik}}
          \;=\;
          -\,n'\,\bm C^{\!\top}\!
          \left(\frac{\hat{\bm r}}{\bm r(\bm q)}\right).
    \end{equation}
    Recall $\bm\theta=\bigl(\theta_i\bigr)$ with $\theta_i=\log q_i-\frac1K\sum_{k=1}^{K}\log q_k$. Let $s(\bm q)=\frac1K\sum_k\log q_k$.
    For $j$:
    \begin{align}
       \partial_{q_j}\theta_i
         &\,=\,
           \frac{\delta_{ij}}{q_i}-\frac1K\frac1{q_j},
       \quad
       \partial_{q_j}\bm\theta
         =\mathrm{diag}(\bm e_j/q)-\tfrac1K\bm 1(\bm e_j/q)^\top,
    \end{align}
    where $(\bm e_j)_k=\delta_{kj}$ and $q=(q_1,\dots,q_K)^\top$.
    Using the chain rule,
    \begin{align}
       \partial_{q_j}F_{\text{reg}}
          &\,=\,
          \frac{\tau_{\bm q}}{2}\,
          \partial_{q_j}\!\bigl(\bm\theta^{\!\top}\bm{L}\bm\theta\bigr)
          \nonumber\\
          &\,=\,
          \tau_{\bm q}\,
          (\partial_{q_j}\bm\theta)^{\!\top}\bm{L}\bm\theta
          \nonumber\\
          &\,=\,
          \tau_{\bm q}\,
          \Bigl[\mathrm{diag}(\bm e_j/q)-\tfrac1K\bm 1(\bm e_j/q)^\top\Bigr]^{\!\top}\bm{L}\bm\theta
          \nonumber\\
          &\,=\,
          \tau_{\bm q}\,
             \tfrac{v_j}{q_j},
          \quad\text{where }\;
             \bm v
             =
             \Bigl[I-\tfrac1K\bm 1\bm 1^{\!\top}\Bigr]\bm{L}\bm\theta
             =P_T\bm{L}\bm\theta.
    \end{align}
    In vector notation
    \begin{equation}
       \label{eq:grad-reg-eucl}
       \nabla_{\!\bm q}F_{\text{reg}}
         \;=\;
         \tau_{\bm q}\,
         \mathrm{diag}(\bm q)^{-1}\,P_T\,\bm{L}\,\bm\theta,
    \end{equation}
    and because $P_T$ annihilates the $\bm 1$‑component we may drop it
    inside the tangent space:
    $P_T\bm{L}=\bm{L}$ since $\bm 1$ is the null‑eigenvector of $\bm{L}$.
    On the simplex $g_{\bm q}^{-1}$ acts as left–multiplication by $\mathrm{diag}(\bm q)$ (followed by projection onto $T_{\bm q}\Delta^{K-1}$, automatic here because every gradient we computed is already centered).
    Hence
    \begin{align}
       \operatorname{grad}^{\mathrm{FR}}F(\bm q)
         &\,=\,
           \mathrm{diag}(\bm q)\,\nabla_{\!\bm q}F
           \nonumber\\[2pt]
         &\,=\,
           \mathrm{diag}(\bm q)\left(
                -\,n'\,\bm C^{\!\top}\!\frac{\hat{\bm r}}{\bm r(\bm q)}
                \;+\;
                \tau_{\bm q}\,\mathrm{diag}(\bm q)^{-1}\bm{L}\bm\theta
           \right)
           \nonumber \\
         &\,=\,
           \mathrm{diag}(\bm q)\left(
                n'\,\bm C^{\!\top}\!\left(\bm 1-\frac{\hat{\bm r}}{\bm r(\bm q)}\right)
                \;+\;
                \tau_{\bm q}\,\bm{L}\,\bm\theta
           \right),
    \end{align}
    which is exactly \eqref{eq:nat-grad-final}.  The rightmost expression
    is automatically orthogonal to $\bm 1$ because each bracketed term
    sums to~$0$; therefore the evolution does not leave the simplex:
    \begin{align*}
            \frac{\mathrm d}{\mathrm d t}\bigl(\bm 1^\top\bm q_t\bigr)
        \;=\;
        \bm 1^\top\dot{\bm q}_t
        \;=\;
        -\,\bm 1^\top\operatorname{grad}^{\mathrm{FR}}F(\bm q_t)
        \;=\;0.
    \end{align*}
    Finally, inserting \eqref{eq:nat-grad-final} in the natural gradient flow yields the replicator part
    $\dot q_{t,j} = -\,q_{t,j}\bigl[n'\,[\bm C^{\!\top}(\bm 1-\hat{\bm r}/\bm r)] +\tau_{\bm q}[\bm{L}\bm\theta_t]\bigr]_j$, confirming the announced replicator–Laplacian dynamics.
\end{proof}

\newpage

\section{Background on Information Geometry}
\label{apd:ig_basics}
Table~\ref{tab:notation_information_geometry} summarizes the basic notations of information geometry required in our study, to make the manuscript self-contained.
See textbooks in information geometry for more details~\citep{amari2016information,amari2000methods}.

\begin{table}[h]
    \centering
    \caption{Basic notations of information geometry.}
    \label{tab:notation_information_geometry}
    \begin{tabularx}{\linewidth}{l|X}
        \toprule
        \thead{Concept} & \thead{Definition} \\
        \midrule
        \addlinespace[2pt]
        Fisher–Rao metric & $g_{\bm q}(\bm v,\bm w)=\sum_i \tfrac{v_iw_i}{q_i}$ on the open simplex. \\
        \addlinespace[2pt]
        \hline
        \addlinespace[2pt]
        $e$‑coordinates & $\theta_i=\log q_i-\tfrac1K\sum_j\log q_j$ (exponential‑family / natural parameters constrained to $T_{\bm q}\Delta^{K-1}$). \\
        \addlinespace[2pt]
        \hline
        \addlinespace[2pt]
        $m$‑coordinates & The usual probabilities $q_i$ (mixture parameters). \\
        \addlinespace[2pt]
        \hline
        \addlinespace[2pt]
        $e$‑ / $m$‑flat sub‑manifold & A subset whose image is affine in the corresponding coordinates; e.g. $\mathcal M={\bm C\bm q}$ is m‑flat because $\bm r=\bm C\bm q$ is linear in $q$. \\
        \addlinespace[2pt]
        \hline
        \addlinespace[2pt]
        Dual flatness, potentials & State $\psi(\bm q)=\sum_i q_i\log q_i$ and $\varphi(\bm\theta)=\log\bigl(\sum_i e^{\theta_i}\bigr)$ satisfying $\bm q=\nabla_{\bm\theta}\varphi$ and $\bm\theta=\nabla_{\bm q}\psi$. \\
        \addlinespace[2pt]
        \hline
        \addlinespace[2pt]
        $e$‑ / $m$‑projection & For a point $p$ and sub‑manifold $\mathcal S$, the minimizer of $D_\mathrm{KL}(p|s)$ (m‑projection) or $D_\mathrm{KL}(s|p)$ (e‑projection). \\
        \addlinespace[2pt]
        \hline
        \addlinespace[2pt]
        $e$‑ / $m$‑convex function & A function whose restriction to every $e$‑(resp. $m$‑) geodesic is convex in the ordinary sense. \\
        \addlinespace[2pt]
        \hline
        \addlinespace[2pt]
        Generalized Pythagorean theorem & For $p$, $q$, $r$ where $q$ is the $m$‑projection of $p$ onto $\mathcal S$ and $r \in \mathcal S$, $D_{\mathrm{KL}}[p|r]=D_{\mathrm{KL}}[p|q]+D_{\mathrm{KL}}[q|r]$. \\
        \addlinespace[2pt]
        \hline
        \addlinespace[2pt]
        Natural gradient & $\operatorname{grad}^{\mathrm{FR}}F=g_{\bm q}^{-1}\nabla_{!\bm q}F$. \\
        \bottomrule
    \end{tabularx}
\end{table}

\end{document}